\documentclass[submission,copyright,creativecommons]{eptcs}
\providecommand{\event}{ICLP 2026} 

\usepackage{iftex}
\usepackage{amsmath}
\usepackage{amssymb}
\usepackage{amsthm}
\usepackage[inline]{enumitem}
\usepackage{tikz-cd}
\usepackage{cancel}

\newtheorem{parametrization}{Parametrization}
\newtheorem{principle}{Principle}

\newtheorem{formalization}{Formalization}
\newtheorem{example}{Example}
\newtheorem{definition}{Definition}
\newtheorem{remark}{Remark}
\newtheorem{theorem}{Theorem}
\newtheorem{proposition}{Proposition}

\DeclareMathOperator{\sufficient}{sufficient}
\DeclareMathOperator{\constraint}{constraint}
\DeclareMathOperator{\necessary}{necessary}
\DeclareMathOperator{\MLN}{MLN}

\DeclareMathOperator{\pa}{pa}

\DeclareMathOperator{\head}{head}

\DeclareMathOperator{\body}{body}

\DeclareMathOperator{\graph}{graph}

\DeclareMathOperator{\Pa}{Pa}

\DeclareMathOperator{\LP}{LP}

\DeclareMathOperator{\explains}{explains}

\usepackage{graphicx}

\ifpdf
  \usepackage{underscore}         
  \usepackage[T1]{fontenc}        
\else
  \usepackage{breakurl}           
\fi

\title{How Rules Represent Causal Knowledge: \\ Causal Modeling
with Probabilistic Logic Programming}
\author{Kilian Rückschloß
\institute{University of Tübingen\\ Tübingen, Germany}
\email{kilian.rueckschloss@uni-tuebingen.de}
\and
Felix Weitkämper
\institute{German University Of Digital Science\\
Potsdam, Germany}
\email{felix.weitkaemper@german-uds.de}
}
\def\titlerunning{Causal Modeling with Probabilistic Logic Programming}
\def\authorrunning{Rückschloß and Weitkämper}
\begin{document}
\maketitle

\begin{abstract}
Pearl famously argues that causal knowledge enables the prediction of intervention effects. By contrast, purely descriptive knowledge supports only conclusions drawn from observations. His theory of causality, however, is developed exclusively within Bayesian networks and causal models. Consequently, it is largely restricted to acyclic causal relationships, and transferring its ideas to other formalisms risks misinterpretation or inconsistency.

This paper brings Pearl’s approach to causality into probabilistic logic programming (PLP). To this end, such programs are aligned with philosophical foundations established in prior work that do not rely on temporal notions; that is, all relevant events are assumed to occur simultaneously. A formal causal semantics for these programs, together with a notion of intervention and an implementation, is proposed. It is shown that this semantics coincides with the P-log semantics for stratified ProbLog programs, while the two may differ in the non-stratified case and for other PLP formalisms.
\end{abstract}

\section{Introduction}

Causality has been a central topic in philosophy for over two thousand years before Pearl~\cite{Causality} brought it into the mainstream of artificial intelligence. With Bayesian networks and causal models, he introduced representations of probability distributions that account for intervention effects. Since distributions alone support only conclusions drawn from observations, he argues that these representations encode causal knowledge that goes beyond the descriptive information contained in the distribution. As evaluating effects of interventions, like actions, is a primary goal of modeling, this view has led to the adoption of causal methods across various domains~\cite{ArifM22,GaoZWFHL24,WuPLZSLQLG24}.

\begin{example}
Smokers are more likely to carry matches, so observing matches suggests that a person smokes, whereas intervening by carrying matches does not force one to smoke. 

According to Pearl~\cite{Causality}, this distinction between observing and intervening requires causal rather than merely descriptive knowledge.
\label{example - smoking and matches}
\end{example}

While Pearl~\cite{Causality} studies causality primarily in terms of interventions, philosophical work more often focuses on explaining why events occur. To bridge this gap, Eelink et al.~\mbox{\cite{EelinkRW25}} draw on Aristotle’s \emph{Posterior Analytics}~\cite{Barnes} and propose a logical theory of causality. They argue that scientific knowledge, that is, justified true belief whose justifications follow the causal order, enables reasoning about interventions.

\begin{example}
Smoking causes people to carry matches, not the other way around. Since, in Example~\ref{example - smoking and matches}, carrying matches is thus not a cause of smoking, it does not make one smoke.
\label{example - smoking and matches - 2}
\end{example}

Building on prior work by Bochman~\cite{bochman}, Eelink et al.~\cite{EelinkRW25} adopt the view that causal explanations are given by composing rules of the form ``smoking parents \emph{cause} their children to smoke'' or ``smoking \emph{causes} carrying matches.'' Rückschloß and Weitkämper~\mbox{\cite{RueckschlossWeitkaemper:2025}} further argue that, in the absence of uncertainty, this makes rule-based languages in logic programming a natural representation of causal knowledge. 
Finally, Eelink et al.~\cite{EelinkRW25} also account for probabilistic causal rules such as ``smoking \emph{causes} carrying matches with probability~$0.8$'', while  extending logic programming with uncertainty gives rise to the field of probabilistic logic programming~\cite{PLP}.

\subsubsection*{Overview of Contributions}
This work connects the probabilistic logic programming frameworks $\LP^{\MLN}$~\cite{LPMLN} and ProbLog~\cite{ProbLog} with the theory of causality of Eelink et al.~\cite{EelinkRW25}. Examples~\ref{example - counterexample ProbLog}--\ref{example - counterexample LPMLN} show that non-stratified ProbLog and $\LP^{\MLN}$ programs require a causal semantics different from the standard ones~\cite{P-log,LPMLN}. To address this issue, Definitions~\ref{definition - causal systems} and~\ref{definition - causal semantics} transfer the formal causal semantics of Eelink et al.~\cite{EelinkRW25} to these frameworks. In this setting, Theorem~\ref{theorem - Consistency for Stratified ProbLog Programs} shows that stratified ProbLog programs under their standard semantics~\cite{P-log,ProbLog} admit a sound causal interpretation. Theorems~\ref{theorem - Consistency for Stratified ProbLog Programs - 1} and~\ref{theorem - causal irrelevance} establish that the proposed approach extends this interpretation while preserving key principles of causal reasoning from epistemology. Lastly, Section~\ref{sec: Implementation} presents an implementation of the proposed semantics.

\subsubsection*{Related Work}
Rückschloß and Weitkämper~\cite{fcm-semantics} use Clark completion~\cite{ClarkCompletion} to interpret acyclic ProbLog programs~\cite{ProbLog} as causal models in the sense of Pearl~\cite{Causality}, thereby transferring his notion of intervention to probabilistic logic programming. The present paper generalizes this approach to cyclic ProbLog and $\LP^{\MLN}$ programs~\cite{LPMLN} by building on the work of Eelink et al.~\cite{EelinkRW25}, which itself generalizes Pearl's account~\cite{Causality}.

Vennekens et al.~\cite{cplogic} also provide a causal interpretation of probabilistic logic programs. Their \emph{execution models} describe stochastic processes that represent a causal order through a temporal succession of events determined by the program rules. In contrast, the approach developed here is based on causal explanation rather than temporal notions and assumes that all relevant events occur simultaneously. 

\begin{example}
    Anna and Kilian are drinking tea when the bell rings. Each will answer the door provided the other does not. Let the propositions~$a$ and~$k$ stand for “Anna answers the door” and “Kilian answers the door,” respectively. The causal rule $k \Leftarrow \neg a$ expresses that Anna’s not answering causes Kilian to answer, and $a \Leftarrow \neg k$ expresses that Kilian’s not answering causes Anna to answer. Hence, the situation is modeled by the logic program
    \(
        \mathbf{P} := \{ a \Leftarrow \neg k,\; k \Leftarrow \neg a \}.
    \)

    According to Vennekens et al.~\cite{cplogic}, there is no execution model for $\mathbf{P}$, since it is unclear which of the two rules applies first and the outcome depends on that order. The causal semantics developed here does not account for temporal succession. Consequently, either Kilian opens the door, causing Anna to remain seated, or Anna opens the door, causing Kilian to remain seated, with each possibility occurring with probability one half.
    \label{example - tea drinking}
\end{example}

\section{Preliminaries and Research Context}

The discussion is restricted to finite probability spaces and employs standard terminology for random variables. Relevant background can be found in introductory texts, such as Chapter~5 of Michelucci~\cite{Michelucci24}. 

\subsection{Maximizing Entropy, LogLinear Models and Bayesian Networks}

Adopting the Bayesian perspective, probabilities~${\pi(A) \in [0,1]}$ represent a rational agent’s degree of belief in an event~$A$. After observing an event~$B$, beliefs are updated by forming \textbf{conditional probabilities}~\cite{WILLIAMSON2009493}: 
\[
\pi(A \mid B) := \pi(B)^{-1} \cdot \pi(A, B)
\]


Fix a sample space~$\Omega$ and events~$A_1, \dots, A_n \subseteq \Omega$ with assigned probabilities~${\pi(A_i) := \pi_i \in [0,1]}$ for~${1 \leq i \leq n}$. A rational agent assumes the distribution~$\pi$ on~$\Omega$ according to the following principle:

\begin{principle}[Maximum Entropy]
The probability distribution~$\pi$ on~$\Omega$ is chosen to maximize the \textbf{entropy}~\({
  \displaystyle  H(\pi) := \sum_{\omega \in \Omega} -\pi(\omega) \cdot \ln (\pi(\omega))
}\)
subject to the constraints~$\pi(A_i) = \pi_i$ for~${1 \leq i \leq n}$.
\label{principle - maximum entropy}
\end{principle}

\begin{remark}
Shannon~\cite{Shannon} introduces the entropy~$H(\pi)$ as a measure of missing information in a distribution~$\pi$. Given only the probabilities~$\pi(A_1), \ldots, \pi(A_n)$, a rational agent should make as few additional assumptions as possible, which is achieved by maximizing entropy.
\label{remark - maximum entropy}
\end{remark}

\begin{example}
    As a calculation shows, according to the maximum entropy principle (Principle~\ref{principle - maximum entropy}), in the absence of further information, a rational agent assumes that a given die is fair, since there is no reason to prefer one outcome over another.
\end{example}

This work employs the language of propositional logic to reason about Boolean random variables. 

Following Franks~\mbox{\cite{sep-logic-propositional}}, it adopts standard notation for propositions $p$, formulas $\phi$, and structures~$\omega$. A \emph{structure}~$\omega$ is identified with the set of propositions that are true in it. The term \emph{world} refers to a consistent set of literals that is maximal with respect to inclusion. Here, a \emph{literal}~$l$ is either a proposition~$p$ or its negation $\neg p$. Since worlds are in one-to-one correspondence with structures, the two terms are used interchangeably.
Finally, sets of Boolean random variables~$\mathfrak{P}$ are identified with propositional alphabets, and formulas~$\phi$ over~$\mathfrak{P}$ with the events given by their models~$\{ \omega : \omega \models \phi \}$.

\begin{example}
    In Example~\ref{example - tea drinking}, the alphabet $\mathfrak{P} := \{ a, k \}$ is introduced. The structure $\omega := \varnothing$ is identified with the world $\{ \neg a,\; \neg k \}$, and the literal $\neg a$ with the event $\{ \varnothing,\; \{ k \} \}$.
\end{example}



For the rest of this section, fix a finite set of Boolean random variables~$\mathfrak{P} := \{ p_1, \ldots, p_n \}$ forming a propositional alphabet. 
Suppose that the probabilities~$\pi(\phi_i) \in (0,1)$ of formulas~$\phi_i$ are given. In general, maximum-entropy distributions cannot be described directly in terms of the probabilities~$\pi(\phi_i)$. However, since~$\pi$ is determined by one value per formula~$\phi_i$, it is natural to seek a parametrization using a single real number~$w_i \in \mathbb{R}$ for each~$\phi_i$.

\begin{parametrization}[Berger et al.~\cite{Berger}]
There exist \textbf{degrees of certainty} ${w_i \in \mathbb{R}}$, ${1\leq i \leq n}$ such that to every world $\omega$ the maximum entropy distribution $\pi$ assigns the probability
\[  \displaystyle
    {\pi (\omega) :=  \left( {\displaystyle \sum_{\omega' \text{ world}} \exp \left( \sum_{\omega' \models \phi_i} w_i \right)} \right)^{-1} \cdot \; {\displaystyle \exp \left( \sum_{\omega \models \phi_i} w_i \right)} }.
\]
\label{parametrization - maximum entropy model}
\end{parametrization}

A \textbf{LogLinear model} of Richardson and Domingos~\cite{MLN} is a finite set~$\Phi$ consisting of \textbf{weighted constraints} $(w, \phi)$, where~${w \in \mathbb{R} \cup \{ \pm \infty \}}$ is a real weight and $\phi$ is a formula. A \textbf{possible world}~$\omega$ is a world that models each \textbf{hard constraint} $(\pm \infty,\phi) \in \Phi$, i.e.,~${\omega \models \phi}$ whenever~${(+ \infty, \phi) \in \Phi}$ and ${\omega \models \neg \phi}$ whenever~${(- \infty, \phi) \in \Phi}$. 

A world $\omega$ then receives the \textbf{weight}
    \[{
    \displaystyle
    w_{\Phi} (\omega) := w (\omega)  := \begin{cases} \displaystyle
    \prod_{\substack{(w, \phi) \in \Phi,~ w \not \in \{ \pm \infty \} \\ \omega \models \phi}} \exp(w), & \omega \; \text{possible world} \\
    0 & \text{otherwise}
    \end{cases}.
    }
    \]
and Parametrization \ref{parametrization - maximum entropy model} assigns to $\omega$ the \textbf{probability}
\begin{equation}
    { \displaystyle
    \pi_{\Phi}^{\MLN} (\omega) := \pi (\omega) := \left( \sum_{\omega' \text{ world}} w(\omega') \right)^{-1} \cdot w(\omega).
    }
    \label{equation - probability MLN}
\end{equation}

\begin{remark}
    The weighted constraints ${(w, \phi) \in \Phi}$ with ${w \in \mathbb{R}}$ lack an intuitive interpretation. Only hard constraints~${(\pm \infty, \phi)}$ enforce that $\phi$ or $\neg \phi$ necessarily holds.  

    Here, $\MLN$ abbreviates ``Markov Logic Networks'', the formalism introduced by Richardson and Domingos~\cite{MLN} that results from lifting LogLinear models to first-order logic.
\end{remark}

Following the idea in Remark~\ref{remark - maximum entropy}, Eelink et al.~\cite{EelinkRW25} view the maximization of entropy subject to the probabilities $\pi(A_1), \ldots, \pi(A_n)$ as a form of justification or deduction for the resulting distribution~$\pi$. Accordingly, they employ LogLinear models to extend descriptive probabilistic knowledge.

\begin{example}
    Assume that Kilian and Anna are equally likely to hear the doorbell in Example~\ref{example - tea drinking}. One might be tempted to represent the resulting situation with the \mbox{LogLinear model}
    \[
    \Phi := \{ (\ln 2, k \leftarrow \neg a),\; (\ln 2, a \leftarrow \neg k) \}.
    \]
    Denote the possible worlds by $\omega_1 := \emptyset$, $\omega_2 := \{a\}$, $\omega_3 := \{k\}$, and $\omega_4 := \{a,k\}$. Then the corresponding weights are $w(\omega_1)=1$ and $w(\omega_2)=w(\omega_3)=w(\omega_4)=4$, yielding the probabilities $\pi(\omega_1)=\frac{1}{13}$ and $\pi(\omega_2)=\pi(\omega_3)=\pi(\omega_4)=\frac{4}{13}$. In particular, both Anna and Kilian answer the door together with probability $\frac{4}{13}$, contradicting the intuition in Example~\ref{example - tea drinking}. 
\label{example - LogLinear models}
\end{example}


Fix a set of random variables $\textbf{V}$. According to Pearl~\cite{Causality}, a \textbf{causal structure} on  $\textbf{V}$ is a directed acyclic graph $G$, i.e.~a partial order, on $\textbf{V}$. A variable~${V_1 \in \textbf{V}}$ is a \textbf{cause} of $V_2 \in \textbf{V}$ or $V_2$ is an \textbf{effect} of $V_1$ if there is a directed path from $V_1$ to $V_2$ in $G$. It is a \textbf{direct cause} of~$V_2$ if the edge~${V_1 \rightarrow V_2}$ exists in $G$, i.e. if and only if the node~${V_1 \in \Pa(V_2)}$ lies in the set~$\Pa(V_2)$ of \textbf{parents} of~$V_2$. 

\begin{example}
Cloudy weather, denoted $c$, causes rain, denoted $r$, which causes a road to be wet, denoted~$w$. This situation is described by the  causal structure: $c \rightarrow r \rightarrow w$.
\label{example - causal structure}
\end{example}

A \textbf{Bayesian network} $\textbf{BN} := (G, \pi(\cdot \vert \pa(\cdot)))$ on~$\textbf{V}$ consists of a causal structure~$G$ and the conditional probabilities~${\pi (v \vert \pa (V)) \in [0,1]}$ of the possible values $v$ of the random variables $V \in \textbf{V}$ conditioned on value assignments of their direct causes~$\pa(V)$. 

\begin{example}
    The causal structure $G$ from Example~\ref{example - causal structure}, together with the following probabilities, gives rise to a Bayesian network~${\textbf{BN} := (G, \pi(\cdot \vert \pa (\cdot)))}$:
    \begin{align}
        & \pi (c) := 0.5 
        && \pi (r \vert c) = 0.6 && 
         \pi (r \vert \neg c) = 0  
         \label{equation -  Bayesian network parameters}\\
        & \pi (w \vert r) = 0.9
        && \pi (w \vert \neg r) = 0.1.
        \label{equation -  extended Bayesian network parameters}
    \end{align}
    \label{example - Bayesian network}
\end{example}

 Given the information encoded in the Bayesian network $\mathbf{BN}$, Williamson~\cite{Williamson2001} maintains that the following principle of causal irrelevance, as formalized below, should be satisfied.

\begin{principle}[Causal Irrelevance]
    Accounting for additional unobserved effects should not alter beliefs.
    \label{principle - causal irrelevance}
\end{principle}

\begin{formalization}[Causal Irrelevance]
    Suppose that the Bayesian network $\mathbf{BN}^+ := (G^+, \pi^{+}(\cdot \mid \pa^+(\cdot)))$ extends $\mathbf{BN}$ by nodes ${V^+ \notin \mathbf{V}}$ that are not causes of any variable in $\mathbf{V}$. According to Principle~\ref{principle - causal irrelevance}, the networks $\mathbf{BN}$ and $\mathbf{BN}^+$ induce the same marginal distribution over $\mathbf{V}$.  
    \label{formalization - causal irrelevance - BN}
\end{formalization}

\begin{example}
    In Example~\ref{example - Bayesian network}, this means that the network $\mathbf{BN}$ and the network with structure $c \rightarrow r$ and Probabilities~(\ref{equation -  Bayesian network parameters}) induce the same distribution over~$\{c,r\}$. A calculation shows that this is not the case if entropy is maximized under Constraints~(\ref{equation -  Bayesian network parameters}) and~(\ref{equation - extended Bayesian network parameters}).
    \label{example - causal irrelevance}
\end{example}

Hence, Principles~\ref{principle - maximum entropy} and~\ref{principle - causal irrelevance} are in tension. Williamson~\cite{Williamson2001} resolves this apparent conflict by maximizing entropy greedily along the causal order. He shows that this yields the semantics of Pearl~\mbox{\cite{Causality}}. Consequently, the network~$\mathbf{BN}$ assigns to a value assignment~$\mathbf{v}$ on~$\mathbf{V}$ the probability
\begin{equation}
    \pi_{\mathbf{BN}}^{BN}(\mathbf{v}) := \pi(\mathbf{v}) :=
    \prod_{i=1}^k \pi\bigl(\mathbf{v}(V_i) \mid \pa(V_i)\bigr)
    \text{, where $\pa(V_i) := \mathbf{v}\vert_{\Pa(V_i)}$ for $1 \leq i \leq k$.}
    \label{equation - Bayesian network semantics}
\end{equation}

\begin{example}
     In Example \ref{example - Bayesian network}, one finds that
    $
        {\pi(\textit{cloudy}, \textit{rain}, \textit{wet}) = 
        0.5 \cdot 0.6  \cdot 0.9 = 0.27}
    $.
    \label{example - semantics of Bayesian networks}
\end{example}

To obtain an analogue of scientific knowledge in the sense of Aristotle's \emph{Posterior Analytics}~\cite{Barnes}, Eelink et al.~\cite{EelinkRW25} hold that justification—namely, entropy maximization—must respect the causal order. Hence, following Williamson~\cite{Williamson2001}, they argue that the semantics provided by Equation~(\ref{equation - Bayesian network semantics}) yields a probabilistic counterpart to this notion of knowledge, thereby explaining the Bayesian network's ability to reason about intervention effects. Here, this argument is applied to probabilistic logic programming.

\subsection{Abductive Logic Programming as a Representation for Causal Knowledge}

Eelink et al.~\cite{EelinkRW25} argue that causal explanations are composed of causal rules. According to Rückschloß and Weitkämper \cite{RueckschlossWeitkaemper:2025}, this makes abductive logic programming a natural formalism to represent causal knowledge in the absence of uncertainty.

A \textbf{logic program}~$\mathbf{P}$ is a set of \textbf{clauses}~$C$, that is, expressions of the form
\({
    h \Leftarrow b_1 \land (b_2 \land (\ldots \land b_n)\ldots),
}\)
also written as $h \Leftarrow b_1 \land \ldots \land b_n$, $h \Leftarrow b_1, \ldots, b_n$, or $\head(C) \Leftarrow \body(C)$. Here, $\head(C) := h \in \mathfrak{P}$ is an atom, called the \textbf{head} of~$C$, and $\body(C) := \{ b_1, \ldots, b_n \}$ is a finite set of literals, called the \textbf{body} of~$C$. If~${C = (h \Leftarrow \top)}$, that is, $\body(C) = \emptyset$, then $C$ is a \textbf{fact} and is simply written as~$h$.

The \textbf{(signed) dependence graph} $G(\mathbf{P})$ of~$\mathbf{P}$ is the directed graph over the alphabet~$\mathfrak{P}$ defined as follows:  
there is an edge $p \rightarrow q$ if and only if there exists a clause $C \in \mathbf{P}$ such that $\head(C) = q$ and~${\body(C) \cap \{ p, \neg p \} \neq \emptyset}$. The edge is \textbf{negative}, denoted $p \stackrel{-}{\rightarrow} q$, if $\neg p \in \body(C)$ and \textbf{positive}, denoted~${p \stackrel{+}{\rightarrow} q}$, if $p \in \body(C)$. Note that an edge may be both negative and positive simultaneously. A \textbf{cycle} in $G(\mathbf{P})$ is a finite alternating sequence of nodes and edges~\({
    q \rightarrow p_1 \rightarrow p_2 \rightarrow \dots \rightarrow p_n \rightarrow q
}\)
that begins and ends at the same node~$q$.  

The program~$\mathbf{P}$ is \textbf{stratified} if its dependence graph does not contain a cycle with a negative edge.

\begin{example}
    The logic program $\textbf{P}$ in Example \ref{example - tea drinking} is not stratified, it has the dependence graph
\begin{tikzcd}
a \arrow[r, "-"', bend right] & k \arrow[l, "-"', bend right]
\end{tikzcd}.
\end{example}


Rückschloß and Weitkämper~\cite{RueckschlossWeitkaemper:2025} give a causal interpretation to logic programming clauses. 
In this view, a clause ${h \Leftarrow b_1 \land \dots \land b_n}$ is 
interpreted as “$b_1 \land \dots \land b_n$ \emph{causes}~$h$” and called a \textbf{causal rule}. 
Following Bochman~\cite{bochman} and Eelink et al.~\mbox{\cite{EelinkRW25}}, these elementary causal rules are 
composed into a relation of \textbf{causal explainability}, denoted by~$(\Rrightarrow)/2$ or $(\Rrightarrow_{\textbf{P}})/2$, which is defined inductively from the rules in~$\textbf{P}$:
\begin{enumerate}
    \item[] \textbf{Causal Rules}: If $\lambda \Rightarrow l \in \mathbf{P}$, then $\lambda \Rrightarrow_{\mathbf{P}} l$. 
    \item[] \textbf{Monotonicity}: If $\lambda \Rrightarrow_{\mathbf{P}} l$, then $\lambda \cup \lambda' \Rrightarrow_{\mathbf{P}} l$ for any sets of literals $\lambda$, $\lambda'$ and literals $l$, $l'$. 
    \item[] \textbf{Cut}: If $\lambda' \Rrightarrow_{\mathbf{P}} l$ and $\lambda \cup \{ l \} \Rrightarrow_{\mathbf{P}} l'$, then $\lambda \cup \lambda' \Rrightarrow_{\mathbf{P}} l'$ for any sets of literals $\lambda$, $\lambda'$ and literals $l$, $l'$.
    \item[] \textbf{Contradiction}: $\{ p, \neg p \} \Rrightarrow_{\mathbf{P}} l$ for any proposition $p \in \mathfrak{P}$ and any literal $l$. 
\end{enumerate}
If $\lambda \Rrightarrow_{\textbf{P}} l$, it is said that $\lambda$ \textbf{explains} $l$.

\begin{remark}
    Like provability, $\vdash/2$, explainability, $\Rrightarrow/2$, defines a binary relation on formulas rather than a new logical connective. In particular, expressions of the form $\phi \Rrightarrow \psi \Rrightarrow \rho$ are meaningless.
\end{remark}

Eelink et al.~\cite{EelinkRW25} hold that the relation of causal explainability formalizes causal explanations as given by the following principle.

\begin{principle}[Causal Explanation]
    Causal explanations are deductions that respect the causal order.
    \label{principle - consistency with logical reasoning}
\end{principle}

The meaning of facts as propositions directly caused by truth itself is unclear;  
however, explanations that are not ultimately grounded in facts necessarily lead to cyclic arguments or infinite regress. 
Following Aristotle's \textit{Posterior Analytics}~\cite{Barnes}, Eelink et al.~\cite{EelinkRW25} hold that such arguments are not explanatory and commit to the following principle. 

\begin{principle}[Causal Foundation]
    \textit{Causal explanations} originate from external premises, whose explanations lie beyond a given scope. 
    \label{principle - causal foundation}
\end{principle}

Kakas and Mancarella~\cite{KakasM90} introduce abductive logic programming to explain observations. An \textbf{abductive logic program}~$\mathcal{P} := (\textbf{P}, \mathfrak{A})$ 
consists of a logic program~$\textbf{P}$ and a set of \textbf{abducibles}~${\mathfrak{A} \subseteq \mathfrak{P}}$  
such that no abducible~$u \in \mathfrak{A}$ is a clause head in~$\textbf{P}$. The program $\mathcal{P}$ has the dependence graph ${G(\mathcal{P}) := G(\textbf{P})}$ and is \textbf{stratified} if $\textbf{P}$ is.  Rückschloß and Weitkämper~\cite{RueckschlossWeitkaemper:2025} use these programs to represent causal knowledge as given by the causal rules in the logic program $\textbf{P}$ and the external premises in $\mathfrak{A}$.


\begin{example}
    Extending Example~\ref{example - tea drinking} by the propositions $ha$ and $hk$, denoting the events that Anna and Kilian hear the bell, respectively, yields the logic program 
    \(
        \mathbf{P} := \{ a \Leftarrow ha, \neg k,\;  k \Leftarrow hk, \neg a \}.
    \)
    Choosing the abducibles $\mathfrak{A} := \{ ha,\; hk \}$ yields the non-stratified abductive logic program $\mathcal{P} := (\mathbf{P}, \mathfrak{A})$.
    \label{example - non-stratified abductive logic program}
\end{example}

To derive a formal semantics that corresponds to causal explanation and scientific knowledge, Eelink et al.~\cite{EelinkRW25} further commit to the following principles from epistemology.

\begin{principle}[Natural Necessity]
    ``\ldots given the existence of the cause, the effect must necessarily follow.'' (Thomas Aquinas: \emph{Summa Contra Gentiles} II:35.4; translation by Anderson~\cite{Anderson})
    \label{principle - Aquinas}
\end{principle}  

\begin{principle}[Sufficient Causation]
    ``\ldots there is nothing without a reason, or no effect without a cause.''  
    (Gottfried Wilhelm Leibniz: \emph{First Truths}; translation by Loemker~\mbox{\cite{Leibniz}}, p.~268)
    \label{principle - Leibniz}
\end{principle}

\begin{principle}[Default Negation]
    Without information to the contrary, statements are considered false.
    \label{principle - default negation}
\end{principle}

Hereby, Principles~\ref{principle - Aquinas} and \ref{principle - Leibniz} are formalized as follows.

The \textbf{constraint content} of a clause~${C}$ is the implication 
\({
\constraint(C) := \head (C) \leftarrow \bigwedge \body (C).
}\)
For a logic program $\textbf{P}$, the \textbf{constraint content} is defined to be
   $
   {\constraint(\textbf{P}) := \{ \constraint(C) \text{ : } C \in \textbf{P} \}}
$. 
The \textbf{explanatory content} of $\textbf{P}$ for a world $\omega$ is the logic program
$
\textbf{P} \vert_\omega := \{ C \in \textbf{P} \text{ : } \omega \models \constraint (C) \}
$.

For an abductive logic program $\mathcal{P} := (\textbf{P}, \mathfrak{A})$,
the \textbf{constraint content} is
$\constraint(\mathcal{P}) := \constraint(\textbf{P})$
and the \textbf{explanatory content} is
$\mathcal{P} \vert_{\omega} := (\textbf{P} \vert_{\omega}, \mathfrak{A})$.
According to causal foundation (Principle~\ref{principle - causal foundation}) and default negation (Principle~\ref{principle - default negation}),
a proposition $p \in \mathfrak{P}$ is \textbf{explainable} in a world~$\omega$,
written $\omega \models \explains(p)$, if either $\omega \cap \mathfrak{A} \Rrightarrow p$,
$p \in \mathfrak{A}$, or~${\neg p \in \omega}$.

\begin{formalization}[Natural Necessity]
    A world $\omega$ satisfies natural necessity (Principle \ref{principle - Aquinas}) with respect to~$\mathcal{P}$ if $\omega \models \constraint(\mathcal{P})$.
    \label{formalization - necessity - LP}
\end{formalization}

\begin{formalization}[Causal Sufficiency]
    A world $\omega$ satisfies causal suffciency (Principle \ref{principle - Leibniz}) with respect to~$\mathcal{P}$ if all propositions are explainable, i.e.,
    ${\omega \models \explains (p)}$ for all propositions $p \in \mathfrak{P}$.
    \label{formalization - sufficiency - LP}
\end{formalization}

\begin{example}
    In Example \ref{example - non-stratified abductive logic program}, the world $\omega := \{ ha \}$ satisfies causal sufficiency (Principle \ref{principle - Leibniz}), but not natural necessity (Principle \ref{principle - Aquinas}). 
\end{example}
  
According to Formalizations \ref{formalization - necessity - LP} and \ref{formalization - sufficiency - LP}, the event that $\mathcal{P}$ satisfies \textbf{(natural) necessity} is the set
$$
\necessary (\mathcal{P}) := \{ \omega \text{ world: } \omega \models \constraint (\mathcal{P}) \}.
$$
and the event that $\mathcal{P}$ is \textbf{(causally) sufficient} is the set 
    $$
    \sufficient (\mathcal{P}) := \{ \omega \text{ world} \mid \forall\;p \in \mathfrak{P}:\; \omega \models \explains (p)   \}.
    $$

A \textbf{causally founded world} or \textbf{model} of $\mathcal{P}$ is a world~${\omega}$ that satisfies natural necessity (Principle \ref{principle - Aquinas}) and causal sufficiency (Principle \ref{principle - Leibniz}) as formalized above, i.e.,
\(
\omega \in \necessary (\mathcal{P}) \cap \sufficient (\mathcal{P}).
\)

In Theorem~4.2, Rückschloß and Weitkämper~\cite{RueckschlossWeitkaemper:2025} prove that every 
causally founded world of an abductive logic program~$\mathcal{P}$ is a stable model in the sense of Gelfond and Lifschitz~\cite{StableModelSemantics}, and vice versa. Consequently, one obtains the following result for stratified programs.

\begin{theorem}[Gelfond and Lifschitz~\cite{StableModelSemantics}]
    Let $\mathcal{P} := (\mathbf{P}, \mathfrak{A})$ be a stratified abductive logic program. For any subset of abducibles $\epsilon \subseteq \mathfrak{A}$, there exists a unique model $\omega$ such that $\omega \cap \mathfrak{A} = \epsilon$.~$\square$
    \label{theorem - unique stable models for stratified programs}
\end{theorem}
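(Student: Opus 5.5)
The plan is to reduce the statement to the classical result of Gelfond and Lifschitz~\cite{StableModelSemantics} on stratified normal programs, using the correspondence between causally founded worlds and stable models (Theorem~4.2 of Rückschloß and Weitkämper~\cite{RueckschlossWeitkaemper:2025}).

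Fix $\epsilon \subseteq \mathfrak{A}$ and form the ordinary logic program
\[
\mathbf{P}_\epsilon := \mathbf{P} \cup \{ u \Leftarrow \top : u \in \epsilon \}.
\]
Since no abducible is a clause head in $\mathbf{P}$, the added facts only add source nodes to the dependence graph. Hence $G(\mathbf{P}_\epsilon)$ has the same cycles as $G(\mathbf{P})$, and $\mathbf{P}_\epsilon$ is stratified.

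The first step is to show that the models $\omega$ of $\mathcal{P}$ with $\omega \cap \mathfrak{A} = \epsilon$ are exactly the stable models of $\mathbf{P}_\epsilon$.
\begin{enumerate}
\item By Theorem~4.2 of~\cite{RueckschlossWeitkaemper:2025}, causally founded worlds of $\mathcal{P}$ coincide with stable models of the abductive program, i.e. worlds $\omega$ that are stable models of $\mathbf{P} \cup (\omega \cap \mathfrak{A})$.
\item Any stable model of $\mathbf{P}_\epsilon$ contains exactly the abducibles in $\epsilon$. Those in $\epsilon$ are facts. Those outside $\epsilon$ head no clause, so they are unsupported and therefore false.
\end{enumerate}
Together these give the bijection.

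The second step invokes Gelfond and Lifschitz: a stratified normal program has exactly one stable model, namely its perfect (iterated least fixpoint) model. This gives both existence and uniqueness of $\omega$.

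If one prefers to avoid citing that result, the fallback is a direct proof.
\begin{enumerate}
\item Choose a stratification $\mathfrak{P} = S_0 \cup \dots \cup S_m$, where positive edges stay within or go upward across strata and negative edges go strictly upward. This exists because there are no cycles through negative edges: take the condensation of $G(\mathbf{P}_\epsilon)$ and rank by the longest chain of negative edges.
\item Build $\omega$ stratum by stratum, taking the least model of the Horn reduct once the lower strata are fixed.
\item For uniqueness, show by induction on strata that any stable model agrees with this construction. On stratum $S_i$, the Gelfond–Lifschitz reduct depends only on the already determined lower strata, so the least-model condition fixes $\omega \cap S_i$.
\end{enumerate}

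The main obstacle is the first step: matching the $\Rrightarrow$-based notion of explainability from $\omega \cap \mathfrak{A}$ with support in the reduct. I rely on the cited Theorem~4.2 for this rather than re-deriving it from the Cut and Monotonicity rules.
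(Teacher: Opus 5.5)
Your proposal is correct and follows the paper's route. The paper likewise gets the statement by combining Theorem~4.2 of Rückschloß and Weitkämper (causally founded worlds are exactly the stable models of $\mathbf{P} \cup (\omega \cap \mathfrak{A})$) with Gelfond and Lifschitz's uniqueness of stable models for stratified programs. Your extra details make that reduction explicit: abducibles outside $\epsilon$ are unsupported, and the facts in $\epsilon$ have empty bodies, so they add no edges at all and $G(\mathbf{P} \cup \epsilon) = G(\mathbf{P})$ is still stratified.
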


\begin{example}
    The program in Example \ref{example - non-stratified abductive logic program} has models $\omega_1 = \emptyset$, ${\omega_2 = \{ ha, a\}}$, ${\omega_3 = \{ hk, k\}}$, ${\omega_4 = \{ ha, hk, a\}}$ and $\omega_5 = \{ ha, hk , k\}$.
    \label{example - stable models of abductive logic program}
\end{example}

Following Rückschloß and Weitkämper~\cite{fcm-semantics},  Rückschloß and Weitkämper~\cite{RueckschlossWeitkaemper:2025} employ Clark completion~\cite{ClarkCompletion} to align abductive logic programs with the causal models of Pearl~\cite{Causality}. In this way, they transfer Pearl's notion of intervention to abductive logic programming.

To represent the intervention of forcing atoms in $\textbf{I} \subseteq \mathfrak{P} \setminus \mathfrak{A}$ to attain values according to assignment~$\textbf{i}$, the \textbf{modified program}~${
\mathcal{P}_\textbf{i} := (\textbf{P}_{\textbf{i}}, \mathfrak{A})}$ is obtained from $\mathcal{P} := (\textbf{P}, \mathfrak{A})$ by the following modifications:

\begin{enumerate*}
    \item Remove all clauses $C$ from $\textbf{P}$ for which $\textrm{head}(C) \in \textbf{I}$.~~~~
    \item Add a fact $p$ to $\textbf{P}_{\textbf{i}}$ whenever ${p \in \textbf{i}}$. 
\end{enumerate*}

\begin{example}
    Assume it is Halloween in Example~\ref{example - non-stratified abductive logic program}. Martina and Lukas ring the bell to ask for candy. Unfortunately, Anna and Kilian have only one bar of chocolate. If Kilian answers the door, he is grumpy and nobody receives the chocolate. If Anna answers the door, they receive the chocolate. In this case, Martina eats it if Lukas does not, and vice versa.

    Let $l$ and $m$ denote that Lukas and Martina eat the chocolate, respectively. The situation is modeled by the abductive logic program $\mathcal{P} := (\mathbf{P}, \mathfrak{A})$, where $\mathfrak{A} := \{ ha, hk \}$ and 
    \[
        \mathbf{P} := \{l \Leftarrow \neg m , a,\; m \Leftarrow \neg l , a,\; a \Leftarrow ha, \neg k,\; k \Leftarrow hk, \neg a \}.
    \] 

    If Martina does not feel well and decides not to eat any chocolate, this yields the modified program~${\mathcal{P}_{\neg m} := (\mathbf{P}_{\neg m}, \mathfrak{A})}$, where
    \(
        \mathbf{P}_{\neg m} := \{l \Leftarrow \neg m , a,\; \xcancel{m \Leftarrow \neg l , a},\; a \Leftarrow ha, \neg k,\; k \Leftarrow hk, \neg a \}.
    \)
    \label{example - intervention in abductive logic programs}
\end{example}

As illustrated in Examples \ref{example - smoking and matches} and \ref{example - smoking and matches - 2}, Rückschloß and Weitkämper~\cite{RueckschlossWeitkaemper:2025} argue that Program $\mathcal{P}$ correctly predicts intervention effects if the following principle of non-interference is satisfied.

\begin{principle}[Non-Interference]
     The impact of interventions is restricted to the causal direction. 
     \label{principle - non-interference}
\end{principle}

\begin{example}
    In Example~\ref{example - intervention in abductive logic programs}, Principle \ref{principle - non-interference} means that Martina not eating the chocolate does not affect who answers the door.
\end{example}

In Theorem 4.3, Rückschloß and Weitkämper~\cite{RueckschlossWeitkaemper:2025} argue that non-interference (Principle~\ref{principle - non-interference}) follows from causal irrelevance (Principle~\ref{principle - causal irrelevance}). In summary, this implies that, under the stable model semantics~\cite{StableModelSemantics}, stratified abductive logic programs can represent scientific knowledge in the sense of Aristotle's \emph{Posterior Analytics}~\cite{Barnes}, thereby enabling reasoning about intervention effects. 

\subsection{Probabilistic Logic Programming}
    
This work extends the causal interpretation and notion of intervention provided by Rückschloß and Weitkämper~\cite{RueckschlossWeitkaemper:2025} to probabilistic extensions of logic programming. Hereby, the focus lies on ProbLog programs as introduced by Kimmig et al.~\cite{ProbLog} and the $\LP^{\mathsf{MLN}}$ programs of Lee and Wang~\cite{LPMLN}.

The language $\LP^{\MLN}$ is inspired by the LogLinear models of Richardson and Domingos~\mbox{\cite{MLN}}. Lee and Wang~\cite{LPMLN} augment logic programs with degrees of certainty as introduced in Parametrization~\ref{parametrization - maximum entropy model}.

A \textbf{weighted clause} $(w, C)$ consists of a clause~$C$ and a weight~$w \in \mathbb{R} \cup \{ \infty \}$. Its \textbf{constraint content} is the weighted constraint
\(
\constraint(w,C) := (w, \constraint(C)) 
\).

An \textbf{$\mathbf{LP}^{\mathbf{MLN}}$ program}~$\mathbb{P}$ is a finite set of weighted clauses. Its \textbf{constraint content} is the LogLinear model
\(
\constraint(\mathbb{P}) := \{ \constraint(w,C) \mid (w,C) \in \mathbb{P} \},
\)
its \textbf{explanatory content} for a world~$\omega$ is the logic program
\(
\mathbb{P} \vert_{\omega} := \{ C \mid \exists w \ ((w,C) \in \mathbb{P} \land \omega \models \constraint(C)) \},
\)
and its \textbf{underlying logic program} is
\(
 \LP(\mathbb{P}) := \{ C \mid (w, C) \in \mathbb{P} \}.
\)
Its \textbf{dependence graph} is $G(\mathbb{P}) := G(\LP(\mathbb{P}))$, and $\mathbb{P}$ is \textbf{stratified} if $\LP(\mathbb{P})$ is.

A structure~$\omega$ is a \textbf{model} of~$\mathbb{P}$ if $\omega$ is (i) a model of the explanatory content $\mathbb{P}\vert_{\omega}$, and (ii) a possible world of the constraint content $\constraint(\mathbb{P})$. Inspired by Parametrization~\ref{parametrization - maximum entropy model}, a world~$\omega$ has \textbf{weight}
\[ \displaystyle
w_{\mathbb{P}} (\omega) := w(\omega) := 
\begin{cases}
\displaystyle
\prod_{\substack{(w,C) \in \mathbb{P}, \; w \neq \infty, \\ \omega \models \constraint(C)}} \exp(w), & \omega \text{ model of } \mathbb{P} \\
0, & \text{otherwise}
\end{cases}.
\]
and 
the \textbf{probability} $\pi_{\mathbb{P}}^{\LP^{\MLN}}(\omega)$ as given by Equation (\ref{equation - probability MLN}).

\begin{remark}
     In $\LP^{\MLN}$, $\LP$ stands for ``logic program'' and~$\MLN$ for ``Markov logic network''~\cite{MLN}. 
\end{remark}

\begin{example}
     Let the $\LP^{\MLN}$ program ${\textbf{P} := \{ (\ln(2), k \Leftarrow \neg a ), \; (\ln (2), a \Leftarrow \neg k )  \}}$ model the situation in Example \ref{example - LogLinear models}. The program $\textbf{P}$ has the stable models $\omega_1$ - $\omega_3$ with weights $w (\omega_1)  = 1$ and $w (\omega_2) = w (\omega_3) = 4$ and probabilities $\pi (\omega_1) = \frac{1}{9}$ and $\pi (\omega_2) = \pi (\omega_3) = \frac{4}{9}$. As desired there is a zero probability for Anna and Kilian to answer the door together. 
    \label{example - LPMLN programs}
\end{example}

The language ProbLog realizes probabilistic logic programming under the distribution semantics of Poole~\cite{distributionsemantics1} and Sato~\cite{distributionsemantics}. They assign probabilities to subprograms~$\textbf{P}' \subseteq \textbf{P}$ by randomly selecting clauses from a logic program~$\textbf{P}$. The resulting distribution on subprograms is then extended to a distribution over their models~\cite{StableModelSemantics}.  Building on this idea, Kimmig et al.~\cite{ProbLog} propose ProbLog and restrict randomness to abducibles or external propositions.

A \textbf{ProbLog program}~\({
\mathbb{P} := (\textbf{P}, \mathfrak{A}, \pi) \equiv (\mathcal{P}, \pi)
}\)
consists of an \textbf{underlying abductive logic program} \({
\mathcal{P} := (\textbf{P}, \mathfrak{A})
}\)
and a map $\pi$ assigning to each \textbf{error term} $u \in \mathfrak{A}$ a probability $\pi(u) \in [0,1]$. The program has the \textbf{dependence graph} $G(\mathbb{P}) := G(\textbf{P})$ and is \textbf{stratified} if the logic program $\textbf{P}$ is.

\begin{example}
    Assigning to the abducibles in Example \ref{example - non-stratified abductive logic program} the probabilities $\pi (ha) = \frac{2}{3}$ and $\pi (hk) = \frac{3}{4}$ yields a ProbLog program~$\mathbb{P} := (\mathcal{P}, \mathfrak{A})$.
    \label{example - ProbLog program}
\end{example}

Baral et al.~\cite{P-log} apply the maximum entropy principle (Principle~\ref{principle - maximum entropy}) to extend the distribution of subprograms to the corresponding models thereby obtaining the P-log semantics.

Let $\mathbb{P} := (\textbf{P}, \mathfrak{A}, \pi)$ be a ProbLog program. A \textbf{choice}~${\epsilon \subseteq \mathfrak{A}}$ is a subset of error terms. It is \textbf{consistent} if the logic program~$\textbf{P} \cup \epsilon$ has at least one stable model. 
Interpreting the probabilities $\pi(u)$ as an independent chance for~${u \in \mathfrak{A}}$ to be true, a choice~$\epsilon$ has the \textbf{probability}
\( \displaystyle
        \pi (\epsilon) := 
        \prod_{ u \in \epsilon} \pi (u) \cdot 
        \prod_{ u \in \mathfrak{A} \setminus  \epsilon} (1 - \pi (u)).
\)

\begin{remark}
    If a choice $\epsilon$ is not consistent, the logic program $\textbf{P} \cup \epsilon$ has no  model. Hence, one would conclude against the causal direction that $\epsilon$ cannot occur, which contradicts Principle \ref{principle - consistency with logical reasoning}.
    \label{remark - consistency}
\end{remark}

From now on, it is assumed that all choices $\epsilon$ of $\mathbb{P}$ are consistent. Then, the program $\textbf{P} \cup \epsilon$ has~${n \in \mathbb{N}_{\geq 1}}$ models~$\omega_1, \dots, \omega_n$ and, by the maximum entropy principle (Principle~\ref{principle - maximum entropy}), each world~$\omega_i$ receives the \textbf{probability}
    \( 
       \pi_{\textbf{P}}^{\text{P-log}} (\omega_i) := n^{-1} \cdot \pi (\epsilon).
    \)
Setting $\pi_{\mathbb{P}}^{\text{P-log}} (\omega) := 0$ for any~$\omega$ that is not a model of some~$\textbf{P} \cup \epsilon$ for a choice~$\epsilon$, the P-log semantics $\pi_{\textbf{P}}^{\text{P-log}}$ defines a probability distribution over $\mathfrak{P}$-structures.

\begin{example}
     The ProbLog program~$\mathbb{P} := (\textbf{P}, \mathfrak{A}, \pi)$ from Example~\ref{example - ProbLog program} has models~$\omega_1$-$\omega_5$ from Example \ref{example - stable models of abductive logic program} with probabilities $\pi_{\mathbb{P}}^{P-log} (\omega_1) = \frac{1}{12}$, $\pi_{\mathbb{P}}^{P-log} (\omega_2) := \frac{1}{6}$, $\pi_{\mathbb{P}}^{P-log}(\omega_3) = \frac{1}{4}$, and ${\pi_{\mathbb{P}}^{P-log} (\omega_4) = \pi_{\mathbb{P}}^{P-log} (\omega_5) = \frac{1}{4}}$. 
    \label{example - semantics of ProbLog programs}
\end{example}

\section{Results}
\label{sec: Results}

To analyze the causal expressiveness of \mbox{$\LP^{\MLN}$} and ProbLog programs, we summarize these frameworks as \emph{causal systems}, analogous to the (maximum entropy) causal systems of Eelink et al.~\cite{EelinkRW25}. This yields a causal interpretation and a corresponding notion of intervention for these programs.

\begin{definition}[Causal Systems]
A \textbf{causal system} ${\mathbf{CS} := (\mathbb{P}, \mathfrak{A}, \Phi)}$ consists of an $\LP^{\MLN}$ program $\mathbb{P}$, a set of abducibles ${\mathfrak{A} \subseteq \mathfrak{P}}$, and a LogLinear model $\Phi$ over $\mathfrak{A}$ such that no abducible $u \in \mathfrak{A}$ occurs as the head of a clause in $\mathbb{P}$. Arguing as in Remark~\ref{remark - consistency}, we further assert that $\mathbb{P} \cup \{(+\infty, p) \mid p \in \epsilon  \}$ has at least one stable model for each \textbf{choice} $\epsilon \subseteq \mathfrak{A}$.

The weight $w$ of a clause $(w,C) \in \mathbb{P}$ represents the degree of certainty in necessity (Principle~\ref{principle - Aquinas}). Hence, the LogLinear model $\constraint(\mathbb{P})$ encodes beliefs in necessity (Principle~\ref{principle - Aquinas}) given choices $\epsilon$.

\begin{formalization}[Natural Necessity]
Together with maximum entropy (Principle~\ref{principle - maximum entropy}) and Parametrization~\ref{parametrization - maximum entropy model}, necessity (Principle~\ref{principle - Aquinas}) implies that a world $\omega$ occurs with probability \\
\[{
\pi_{\mathbb{P}}^{\text{necessity}}(\omega) :=
\pi_{\constraint(\mathbb{P})}^{\MLN}(\omega \mid \omega \cap \mathfrak{A})
\cdot
\pi_{\Phi}^{\MLN}(\omega \cap \mathfrak{A}).
}\]
\label{formalization - natural necessity - CS}
\end{formalization}

The \textbf{explanatory content} of $\mathbf{CS}$ for a world $\omega$ is the abductive logic program ${\mathbf{CS} \vert_{\omega} := (\mathbb{P} \vert_{\omega}, \mathfrak{A})}$, and sufficiency (Principle~\ref{principle - Leibniz}) is formalized as follows.

\begin{formalization}[Causal Sufficiency]
A world $\omega$ satisfies sufficiency (Principle~\ref{principle - Leibniz}) with respect to $\mathbf{CS}$ if~$\omega$ is a model of $\mathbf{CS} \vert_{\omega}$. The event that $\mathbf{CS}$ satisfies sufficiency (Principle~\ref{principle - Leibniz}) is then given by the set of all such worlds $\sufficient(\mathbf{CS})$.
\label{formalization - causal sufficiency - CS}
\end{formalization}

To represent an intervention that forces atoms in $\mathbf{I} \subseteq \mathfrak{P} \setminus \mathfrak{A}$ to attain values according to the assignment~$\mathbf{i}$, the \textbf{modified system}
${\mathbf{CS}_{\mathbf{i}} := (\mathbb{P}_{\mathbf{i}}, \mathfrak{A}, \Phi)}$
is obtained from $\mathbf{CS}$ by the following modifications:
\begin{enumerate}
    \item Remove all clauses $(w,C)$ from $\mathbb{P}$ for which $\mathrm{head}(C) \in \mathbf{I}$.
    \item Add a fact $(\infty, p)$ to $\mathbb{P}_{\mathbf{i}}$ whenever $p \in \mathbf{i}$.
\end{enumerate}

A \textbf{semantics} is a partial map $\pi_{\cdot}$ that assigns to some causal systems $\mathbf{CS}$ a distribution $\pi_{\mathbf{CS}}$ on~$\mathfrak{P}$. 
\label{definition - causal systems}
\end{definition}

We identify $\LP^{\MLN}$ programs $\mathbb{P}$ with the causal systems 
$\mathbf{CS}(\mathbb{P}) := (\mathbb{P}, \emptyset, \emptyset)$ and  ProbLog programs ${\mathbb{P} := (\textbf{P}, \mathfrak{A}, \pi)}$ with the causal systems $\mathbf{CS} := (\textbf{P}, \mathfrak{A}, \Phi)$, where the logic program $\textbf{P}$ is understood as the $\LP^{\MLN}$ program $\{ (\infty, C) \mid C \in \textbf{P} \}$ and 
$\Phi := \{ (\ln(\pi(u)), u), (\ln(1 - \pi(u)), \neg u) \mid u \in \mathfrak{A} \}$.

Under Formalizations~\ref{formalization - natural necessity - CS} and~\ref{formalization - causal sufficiency - CS}, a calculation shows that the $\LP^{\MLN}$~\cite{LPMLN} and P-log semantics~\cite{P-log} compute the probability that worlds satisfying sufficiency (Principle~\ref{principle - Leibniz}) also satisfy necessity (Principle~\ref{principle - Aquinas}). Consequently, they straightforwardly generalize the causally founded worlds of Rückschloß and Weitkämper~\cite{EelinkRW25} under the maximum entropy principle (Principle \ref{principle - maximum entropy}) and Parametrization \ref{parametrization - maximum entropy model}.

\begin{proposition}[Consistency with Necessity and Sufficiency]
    \begin{enumerate}
    \item[] 
        \item[] For every $\LP^{\MLN}$ program $\mathbb{P}$ it holds that
    \(
    \pi_{\mathbb{P}}^{\LP^{\MLN}} (\cdot) := \pi_{\mathbb{P}}^{\text{necessity}} (\cdot \mid \sufficient (\mathbb{P})).
    \)
    \item[] For every ProbLog program $\mathbb{P} := (\textbf{P}, \mathfrak{A}, \pi)$ it holds that
    \(
    \pi_{\mathbb{P}}^{\text{P-log}} (\cdot) := \pi_{\mathbb{P}}^{\text{necessity}} (\cdot  \mid \sufficient (\mathbb{P})).\quad\square
    \)
    \end{enumerate}
    \label{proposition - analysis of semantics}
\end{proposition}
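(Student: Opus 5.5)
The plan is to unfold Formalization~\ref{formalization - natural necessity - CS} and Formalization~\ref{formalization - causal sufficiency - CS} for the two embeddings of programs into causal systems fixed after Definition~\ref{definition - causal systems}. I then compare the resulting formulas world by world with the $\LP^{\MLN}$ weights and with the P-log probabilities. In both cases the heart of the argument is one set-theoretic identification:
\[
\sufficient(\mathbf{CS}) \cap \{\omega : \omega \text{ possible world of } \constraint(\mathbb{P})\} = \{\omega : \omega \text{ model of } \mathbb{P}\}.
\]
This holds because $\omega \in \sufficient(\mathbf{CS})$ says exactly that $\omega$ is a model of the explanatory content $\mathbb{P}\vert_\omega$, which is condition (i) in the definition of a model of an $\LP^{\MLN}$ program. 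Via Theorem~4.2 of Rückschloß and Weitkämper, the same condition says that $\omega$ is a stable model of $\mathbf{P}\cup(\omega\cap\mathfrak{A})$.

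\emph{The $\LP^{\MLN}$ case.} Here $\mathfrak{A}=\emptyset$ and $\Phi=\emptyset$. Hence $\omega\cap\mathfrak{A}=\emptyset$ for every world, the factor $\pi^{\MLN}_\Phi(\emptyset)$ equals $1$, and the conditioning on $\omega\cap\mathfrak{A}$ is trivial. So $\pi^{\text{necessity}}_{\mathbb{P}}=\pi^{\MLN}_{\constraint(\mathbb{P})}$, and by Equation~(\ref{equation - probability MLN}) it is proportional to $w_{\constraint(\mathbb{P})}(\omega)$. The MLN weight of a world and the $\LP^{\MLN}$ weight $w_{\mathbb{P}}(\omega)$ are the same product $\prod \exp(w)$ over the soft clauses whose constraint content $\omega$ satisfies. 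They differ only in where they vanish: the first vanishes off the possible worlds of $\constraint(\mathbb{P})$, the second off the models of $\mathbb{P}$. Conditioning on $\sufficient(\mathbb{P})$ multiplies by the indicator of sufficiency and renormalizes. By the identification above, this turns $w_{\constraint(\mathbb{P})}$ into $w_{\mathbb{P}}$, so the two normalized distributions coincide. This case is routine.

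\emph{The ProbLog case.} All clauses carry weight $\infty$, so $\pi^{\MLN}_{\constraint(\mathbf{P})}(\cdot\mid\epsilon)$ is uniform on the $m_\epsilon$ worlds $\omega$ with $\omega\cap\mathfrak{A}=\epsilon$ and $\omega\models\constraint(\mathbf{P})$. The weights $\ln\pi(u)$ and $\ln(1-\pi(u))$ in $\Phi$ make the normalizing constant $\prod_u(\pi(u)+1-\pi(u))=1$, so $\pi^{\MLN}_\Phi(\epsilon)=\pi(\epsilon)$. (If some $\pi(u)\in\{0,1\}$, the corresponding $\ln 0$ weights are read as excluding the world, which preserves this identity.) Hence $\pi^{\text{necessity}}(\omega)=\pi(\epsilon)/m_\epsilon$. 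By the identification, together with the standing assumption that every choice is consistent, $\sufficient(\mathbb{P})$ cuts each block $\epsilon$ down to the $n_\epsilon\ge 1$ stable models of $\mathbf{P}\cup\epsilon$. The goal is then to show that the conditioned value on such a model equals $\pi(\epsilon)/n_\epsilon=\pi^{\text{P-log}}_{\mathbb{P}}(\omega)$, and that it is $0$ elsewhere.

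The main obstacle is exactly this renormalization step. Conditioning on $\sufficient(\mathbb{P})$ must leave the marginal $\pi(\epsilon)$ of each choice untouched; in other words, the normalization has to be carried out inside each block $\epsilon$ of fixed abducibles. I would argue for this by reading the conditioning, as in Formalization~\ref{formalization - natural necessity - CS}, as acting on the factor $\pi^{\MLN}_{\constraint(\mathbb{P})}(\cdot\mid\omega\cap\mathfrak{A})$. The justification is that the abducibles are external premises (Principle~\ref{principle - causal foundation}), and sufficiency is, by its definition, a constraint on the non-abducible part given $\epsilon$, so it carries no information back to $\epsilon$; this is the same causal-direction argument as in Remark~\ref{remark - consistency}. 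With this reading the block-wise computation gives $\frac{\pi(\epsilon)}{m_\epsilon}\cdot\frac{m_\epsilon}{n_\epsilon}=\pi(\epsilon)/n_\epsilon$, which matches the P-log semantics. If the conditioning is instead taken literally over the whole space, one gets $\pi(\epsilon)/(m_\epsilon Z)$ with $Z=\sum_\epsilon\pi(\epsilon)n_\epsilon/m_\epsilon$. Agreement with P-log then requires $n_\epsilon/m_\epsilon$ to be independent of $\epsilon$, so I would check this condition explicitly and flag it as the point where the block-wise reading is essential.
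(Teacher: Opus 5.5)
Your $\LP^{\MLN}$ argument is correct and is presumably the ``calculation'' the paper alludes to; the paper gives no written proof. With $\mathfrak{A}=\emptyset$ there is a single block, so conditioning on $\sufficient(\mathbb{P})$ simply multiplies $w_{\constraint(\mathbb{P})}$ by the indicator of condition (i). The denominator is positive because $\mathbb{P}$ has a model. Your ProbLog computations are also right: $\pi^{\MLN}_\Phi(\epsilon)=\pi(\epsilon)$, $\pi^{\text{necessity}}_{\mathbb{P}}(\omega)=\pi(\epsilon)/m_\epsilon$, and the block-wise value $\pi(\epsilon)/n_\epsilon$. One imprecision: sufficiency alone says that $\omega$ is a stable model of $\mathbf{P}\vert_\omega\cup(\omega\cap\mathfrak{A})$, not of $\mathbf{P}\cup(\omega\cap\mathfrak{A})$. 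The two agree only on worlds satisfying $\constraint(\mathbf{P})$, which is all that matters here.

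The gap is the one you flagged, but you cannot close it as you suggest, and you should not leave it as something to ``check''. The condition that $n_\epsilon/m_\epsilon$ be independent of $\epsilon$ fails in general, so with ordinary conditioning the second identity is false. Take $\mathfrak{A}=\{u\}$, $\pi(u)=\frac12$, and $\mathbf{P}=\{p\Leftarrow u\}$. Then $m_{\{u\}}=n_{\{u\}}=1$, but $m_\emptyset=2$ and $n_\emptyset=1$. Hence $\pi^{\text{necessity}}_{\mathbb{P}}$ gives $\frac12,\frac14,\frac14$ to $\{u,p\},\emptyset,\{p\}$. The world $\{p\}$ is not sufficient, so conditioning yields $\frac23,\frac13$, where P-log yields $\frac12,\frac12$. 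The paper's own Example~\ref{example - ProbLog program} fails as well: there $n_\epsilon/m_\epsilon\in\{\frac14,\frac13,\frac23\}$, and $\omega_1=\emptyset$ receives $\frac{3}{71}$ instead of $\frac1{12}$.

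In general $\pi^{\text{necessity}}_{\mathbb{P}}(\epsilon\mid\sufficient(\mathbb{P}))=\pi(\epsilon)\,(n_\epsilon/m_\epsilon)/Z$, so sufficiency \emph{does} carry information back to $\epsilon$. Your appeal to Principle~\ref{principle - causal foundation} and Remark~\ref{remark - consistency} is therefore a change of definition, not a derivation. What your block-wise computation actually proves is
\[
\pi^{\text{P-log}}_{\mathbb{P}}(\omega)=\pi^{\MLN}_{\constraint(\mathbb{P})}\bigl(\omega\mid\omega\cap\mathfrak{A},\,\sufficient(\mathbb{P})\bigr)\cdot\pi^{\MLN}_{\Phi}(\omega\cap\mathfrak{A}),
\]
that is, sufficiency conditions only the first factor of Formalization~\ref{formalization - natural necessity - CS}. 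This is exactly the form the paper itself uses in Equation~(\ref{equation - demonstration}), where one conditions jointly on $\pa(V)$ and sufficiency. State and prove the proposition in that form. Also record the counterexample showing that the globally conditioned version does not hold, instead of presenting it as provable.
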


Following Eelink et al.~\cite{EelinkRW25}, a causal system must additionally satisfy causal irrelevance (Principle~\ref{principle - causal irrelevance}) in order to yield scientific knowledge. We generalize Formalization~\ref{formalization - causal irrelevance - BN} of causal irrelevance as follows.

\begin{formalization}[Causal Irrelevance]
Fix a causal system $\textbf{CS} := (\mathbb{P}, \mathfrak{A}, \Phi)$ over an alphabet $\mathfrak{P}$.  
For a subset of propositions $S \subseteq \mathfrak{P}$, let $\mathfrak{P}^{<S}$ denote the set of all propositions $q \notin S$ that are non-descendants of every proposition in $S$ in the dependency graph $G(\textbf{CS})$.  

Let 
$\mathbb{P}^{<S} := \{(w,C) \in \mathbb{P} \mid \head(C) \in \mathfrak{P}^{<S}\}$ 
be the program obtained from $\mathbb{P}$ by deleting all clauses whose head is not in $\mathfrak{P}^{<S}$, and let 
$\textbf{CS}^{<S} := (\mathbb{P}^{<S}, \mathfrak{A}, \Phi)$ 
be the corresponding causal system.

Then Principle~\ref{principle - causal irrelevance} holds for $\textbf{CS}$ under the semantics $\pi_{\cdot}$ if, for all $S \subseteq \mathfrak{P}$, the systems $\textbf{CS}$ and $\textbf{CS}^{<S}$ induce the same marginal distribution on $\mathfrak{P}^{<S}$.
\label{formalization - causal irrelevance - CS}
\end{formalization}

Causal irrelevance (Principle~\ref{principle - causal irrelevance}), as stated in Formalization~\ref{formalization - causal irrelevance - CS}, implies non-interference (Principle~\ref{principle - non-interference}).

\begin{proposition}
Assume a causal system $\mathbf{CS} := (\mathbb{P}, \mathfrak{A}, \Phi)$ over the alphabet $\mathfrak{P}$ satisfies causal irrelevance (Principle~\ref{principle - causal irrelevance}) with respect to a semantics $\pi_{\cdot}$ as stated in Formalization~\ref{formalization - causal irrelevance - CS}.  

Choose a subset $\mathbf{I} \subseteq \mathfrak{P} \setminus \mathfrak{A}$ together with a value assignment $\mathbf{i}$. Then, for every value assignment $\omega^{<\mathbf{I}}$ on $\mathfrak{P}^{< \textbf{I}}$, we have
\(
\pi_{\mathbf{CS}_{\mathbf{i}}}(\omega^{<\mathbf{I}}) = \pi_{\mathbf{CS}}(\omega^{<\mathbf{I}}).
\)
\label{proposition - causal irrelevance and intervention}
\end{proposition}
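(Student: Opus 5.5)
The plan is to apply the causal irrelevance hypothesis twice: once to $\mathbf{CS}$ and once to the modified system $\mathbf{CS}_{\mathbf{i}}$, both with $S := \mathbf{I}$. Then I will show that the two truncated systems $\mathbf{CS}^{<\mathbf{I}}$ and $(\mathbf{CS}_{\mathbf{i}})^{<\mathbf{I}}$ coincide. Chaining the resulting equalities of marginals on $\mathfrak{P}^{<\mathbf{I}}$ then gives the claim.

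The first application is immediate from Formalization~\ref{formalization - causal irrelevance - CS}. It yields $\pi_{\mathbf{CS}}(\omega^{<\mathbf{I}}) = \pi_{\mathbf{CS}^{<\mathbf{I}}}(\omega^{<\mathbf{I}})$.

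Next, compare the dependence graphs. The graph $G(\mathbf{CS}_{\mathbf{i}})$ arises from $G(\mathbf{CS})$ by deleting all edges into nodes of $\mathbf{I}$, since clauses with head in $\mathbf{I}$ are removed and only facts are added. Consequently, every non-descendant of $\mathbf{I}$ in $G(\mathbf{CS})$ is still a non-descendant in $G(\mathbf{CS}_{\mathbf{i}})$. Conversely, a directed path in $G(\mathbf{CS}_{\mathbf{i}})$ starting in $\mathbf{I}$ is also a path in $G(\mathbf{CS})$. Hence $\mathfrak{P}^{<\mathbf{I}}$ is the same set for both systems.

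Now compare the truncated programs. The clauses of $\mathbb{P}_{\mathbf{i}}$ with head in $\mathfrak{P}^{<\mathbf{I}}$ are exactly those of $\mathbb{P}$ with head in $\mathfrak{P}^{<\mathbf{I}}$. The reason is that only clauses with heads in $\mathbf{I}$ were removed, and the added facts $(\infty,p)$ all have $p \in \mathbf{I}$, which is disjoint from $\mathfrak{P}^{<\mathbf{I}}$. Therefore $(\mathbb{P}_{\mathbf{i}})^{<\mathbf{I}} = \mathbb{P}^{<\mathbf{I}}$. Since the abducibles $\mathfrak{A}$ and the LogLinear model $\Phi$ are unchanged, $(\mathbf{CS}_{\mathbf{i}})^{<\mathbf{I}} = \mathbf{CS}^{<\mathbf{I}}$.

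The main obstacle is the second application of causal irrelevance. The hypothesis is stated for $\mathbf{CS}$, not for $\mathbf{CS}_{\mathbf{i}}$, so I need $\pi_{\mathbf{CS}_{\mathbf{i}}}(\omega^{<\mathbf{I}}) = \pi_{(\mathbf{CS}_{\mathbf{i}})^{<\mathbf{I}}}(\omega^{<\mathbf{I}})$ from somewhere. The first thing I would try is to read the hypothesis in the intended sense of the paper: causal irrelevance holds for the systems under consideration together with their interventions. This matches how non-interference is derived from causal irrelevance in Theorem~4.3 of Rückschloß and Weitkämper~\cite{RueckschlossWeitkaemper:2025}.

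If that reading is not available, I would use a reduction instead. Pick $S := \mathbf{I}$ in the hypothesis for $\mathbf{CS}$. In $\mathbf{CS}_{\mathbf{i}}$, the nodes of $\mathbf{I}$ have no incoming edges and are fixed by hard facts. Relative to $\mathfrak{P}^{<\mathbf{I}}$, they are therefore just unobserved effects attached to the truncated system, so Principle~\ref{principle - causal irrelevance} applies in exactly the form of Formalization~\ref{formalization - causal irrelevance - BN}. Once this step is justified, the chain of equalities
\(
\pi_{\mathbf{CS}_{\mathbf{i}}}(\omega^{<\mathbf{I}}) = \pi_{\mathbf{CS}^{<\mathbf{I}}}(\omega^{<\mathbf{I}}) = \pi_{\mathbf{CS}}(\omega^{<\mathbf{I}})
\)
completes the proof.
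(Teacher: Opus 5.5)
Your chain of equalities is what the paper's one-line proof abbreviates: ``direct consequence of Formalization~\ref{formalization - causal irrelevance - CS} as $\mathbf{CS}^{<\mathbf{I}} = \mathbf{CS}_{\mathbf{i}}^{<\mathbf{I}}$''. The obstacle you flag is real, and the paper passes over it by silently applying causal irrelevance to $\mathbf{CS}_{\mathbf{i}}$ as well.

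A semantics is an arbitrary partial map, and the hypothesis constrains only $\mathbf{CS}$ and its truncations. Consequently the statement as literally written is false. Take the stratified ProbLog program with $\mathfrak{A} = \{u\}$, $\pi(u) = \frac{1}{2}$, $\mathbf{P} = \{p \Leftarrow u\}$, $\mathbf{I} = \{p\}$, and $\mathbf{i}$ making $p$ true. Let $\pi_\cdot$ agree with $\pi^{\text{P-log}}_\cdot$ everywhere except on $\mathbf{CS}_{\mathbf{i}}$, where it is the point mass on $\{u,p\}$. Since $\mathbb{P}_{\mathbf{i}}$ contains the new fact $(\infty,p)$, $\mathbf{CS}_{\mathbf{i}}$ is neither $\mathbf{CS}$ nor any $\mathbf{CS}^{<S}$. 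So $\mathbf{CS}$ still satisfies causal irrelevance: only $S=\{p\}$ is non-trivial, and there both marginals of $u$ equal $\frac{1}{2}$. Yet the marginal of $u$ on $\mathfrak{P}^{<\mathbf{I}} = \{u\}$ jumps to $1$ under the intervention.

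Your first option is therefore the necessary repair, and it is what the paper tacitly assumes. That option is to assume causal irrelevance for $\mathbf{CS}_{\mathbf{i}}$ too, at least for $S=\mathbf{I}$. This holds, for example, when $\pi_\cdot$ satisfies it for every causal system, as $\pi^{\text{causal}}_\cdot$ does by Theorem~\ref{theorem - causal irrelevance}. Drop the fallback ``reduction'': it merely re-asserts Principle~\ref{principle - causal irrelevance} for $\mathbf{CS}_{\mathbf{i}}$. Formalization~\ref{formalization - causal irrelevance - BN} concerns Bayesian networks and says nothing about an arbitrary $\pi_\cdot$.

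Second, your comparison of dependence graphs proves the same inclusion twice. Both sentences say that paths of $G(\mathbf{CS}_{\mathbf{i}})$ are paths of $G(\mathbf{CS})$, i.e.\ descendant sets can only shrink. You also need that no descendant of $\mathbf{I}$ lying outside $\mathbf{I}$ is lost. Otherwise $\mathfrak{P}^{<\mathbf{I}}$ could be larger for $\mathbf{CS}_{\mathbf{i}}$, and the two truncations would differ. This holds because every deleted edge points into $\mathbf{I}$. Given a path in $G(\mathbf{CS})$ from $\mathbf{I}$ to some $q \notin \mathbf{I}$, start it at its last node in $\mathbf{I}$. The remaining segment uses no deleted edge, so it lies in $G(\mathbf{CS}_{\mathbf{i}})$. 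With these two repairs your argument is complete.
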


\begin{proof}
    This is a direct consequence from Formalization~\ref{formalization - causal irrelevance - CS} as $\textbf{CS}^{< \textbf{I}} = \textbf{CS}_{\textbf{i}}^{< \textbf{I}}$.
\end{proof}

Our core observation is that the semantics of $\LP^{\MLN}$ programs~\cite{LPMLN} and the P-log semantics of Baral et al.~\cite{P-log} do not satisfy Formalization~\ref{formalization - causal irrelevance - CS} of causal irrelevance (Principle~\ref{principle - causal irrelevance}). Consequently, counterintuitive results arise when reasoning about external interventions, since non-interference (Principle~\ref{principle - non-interference}) is violated.

\begin{example}
Let us combine Examples~\ref{example - intervention in abductive logic programs} and~\ref{example - ProbLog program}. The resulting situation is modeled by the ProbLog program $\mathbb{P} := (\mathcal{P}, \pi)$ obtained from the program $\mathcal{P}$ of Example~\ref{example - intervention in abductive logic programs} by setting $\pi(ha) := \frac{2}{3}$ and ${\pi(hk) := \frac{3}{4}}$ as in Example~\ref{example - ProbLog program}. It has the stable models ${\omega_1 = \emptyset}$, ${\omega_2 = \{ hk, k \}}$, ${\omega_3 = \{ ha, a, l \}}$, ${\omega_4 = \{ ha, a, m \}}$, ${\omega_5 = \{ ha, hk, k \}}$, ${\omega_6 = \{ ha, hk, a, l \}}$, and ${\omega_7 = \{ ha, hk, a, m \}}$ occurring with probabilities ${\pi_{\mathbb{P}}^{\text{P-log}}(\omega_1) = \frac{1}{12}}$, ${\pi_{\mathbb{P}}^{\text{P-log}}(\omega_2) = \frac{1}{4}}$, ${\pi_{\mathbb{P}}^{\text{P-log}} (\omega_3) = \pi_{\mathbb{P}}^{\text{P-log}}(\omega_4) = \frac{1}{12}}$, and ${\pi_{\mathbb{P}}^{\text{P-log}}(\omega_5) = \pi_{\mathbb{P}}^{\text{P-log}}(\omega_6) = \pi_{\mathbb{P}}^{\text{P-log}}(\omega_7) = \frac{1}{6}}$. 

Anna answers the door with probability $\pi_{\mathbb{P}}^{\text{P-log}}(a) = \frac{1}{2}$. In Examples~\ref{example - ProbLog program} and~\ref{example - semantics of ProbLog programs}, she answers the door with probability $\frac{5}{12}$, thereby violating causal irrelevance (Principle~\ref{principle - causal irrelevance}) under Formalization~\ref{formalization - causal irrelevance - CS}.

If Martina decides not to eat any chocolate, this leads to the modified program $\mathbb{P}_{\neg m} := (\mathcal{P}_{\neg m}, \pi)$, where $\mathcal{P}_{\neg m}$ is as in Example~\ref{example - intervention in abductive logic programs}. We then obtain $\pi_{\mathbb{P}_{\neg m}}^{\text{P-log}}(a) = \frac{5}{12} \neq \frac{1}{2}$, thereby violating non-interference (Principle~\ref{principle - non-interference}), since $a$ lies below $m$ in the causal order induced by the dependence graph $G(\mathbb{P})$.
\label{example - counterexample ProbLog}
\end{example}

\begin{example}
Similar to Example~\ref{example - counterexample ProbLog}, we model the situation from Examples~\ref{example - intervention in abductive logic programs} and~\ref{example - LPMLN programs} with the $\LP^{\MLN}$ program
\[
\mathbb{P} := \{ (\infty, m \Leftarrow a, \neg l),\; (\infty, l \Leftarrow a, \neg m),\; (\ln 2, k \Leftarrow \neg a),\; (\ln 2, a \Leftarrow \neg k) \}.
\]
It has the stable models $\omega_1 = \emptyset$, $\omega_2 = \{ k \}$, $\omega_3 = \{ a,l \}$, and $\omega_4 = \{ a,m \}$ with probabilities
${\pi_{\mathbb{P}}^{\LP^{\MLN}}(\omega_1) = \frac{1}{13}}$ and
$\pi_{\mathbb{P}}^{\LP^{\MLN}}(\omega_2) = \pi_{\mathbb{P}}^{\LP^{\MLN}}(\omega_3) = \pi_{\mathbb{P}}^{\LP^{\MLN}}(\omega_4) = \frac{4}{13}$.

Hence, Anna answers the door with probability
$\pi_{\mathbb{P}}^{\LP^{\MLN}}(a) = \frac{8}{13}$.
By contrast, in Example~\ref{example - LPMLN programs} she answers the door with probability $\frac{4}{9}$, violating causal irrelevance (Principle~\ref{principle - causal irrelevance}) under Formalization~\ref{formalization - causal irrelevance - CS}.

If Martina decides not to eat any chocolate, the modified program
\[
\mathbb{P}_{\neg m} := \{ \xcancel{(\infty, m \Leftarrow a, \neg l)},\; (\infty, l \Leftarrow a, \neg m),\; (\ln 2, k \Leftarrow \neg a),\; (\ln 2, a \Leftarrow \neg k) \}
\]
yields $\pi_{\mathbb{P}_{\neg m}}^{\LP^{\MLN}}(a) = \frac{4}{9} \neq \frac{8}{13}$, thus violating non-interference (Principle~\ref{principle - non-interference}).
\end{example}

However,  for stratified ProbLog programs, we find that the P-log semantics of Baral et al.~\cite{PLPconterfactuals} satisfies Formalization \ref{formalization - causal irrelevance - CS} of causal irrelevance (Principle \ref{principle - causal irrelevance}).

\begin{theorem}[Causal Irrelevance for Stratified ProbLog Programs]
    If $\mathbb{P} := (\textbf{P}, \mathfrak{A}, \Phi)$ is a stratified ProbLog program, one finds that~$\pi_{\cdot}^{\text{P-log}}$ satisfies Formalization \ref{formalization - causal irrelevance - CS} of Principle \ref{principle - causal irrelevance}.
    \label{theorem - Consistency for Stratified ProbLog Programs}
\end{theorem}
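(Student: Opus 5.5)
We reduce the claim to the structure of stable models of stratified programs, using Theorem~\ref{theorem - unique stable models for stratified programs}.

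\emph{Step 1: P-log is a pushforward of the choice distribution.} Let $\mathbb{P} := (\mathbf{P}, \mathfrak{A}, \pi)$ be stratified. By Theorem~\ref{theorem - unique stable models for stratified programs}, every choice $\epsilon \subseteq \mathfrak{A}$ determines a unique model $\omega_{\mathbf{P}}(\epsilon)$ of $\mathbf{P} \cup \epsilon$, so $n = 1$ in the definition of the P-log semantics. Hence
\[
\pi_{\mathbb{P}}^{\text{P-log}}(\omega) = \sum_{\epsilon \,:\, \omega_{\mathbf{P}}(\epsilon) = \omega} \pi(\epsilon),
\]
and since $\omega_{\mathbf{P}}(\epsilon) \cap \mathfrak{A} = \epsilon$, this sum has at most one term. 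Fix $S \subseteq \mathfrak{P}$. The program $\mathbf{P}^{<S} \subseteq \mathbf{P}$ is again stratified, because its dependence graph is a subgraph of $G(\mathbf{P})$, and it has the same abducibles and the same $\pi$. Thus $\pi^{\text{P-log}}_{\mathbb{P}^{<S}}$ is the pushforward of the same distribution $\pi(\epsilon)$ under $\epsilon \mapsto \omega_{\mathbf{P}^{<S}}(\epsilon)$. It therefore suffices to prove, for every choice $\epsilon$,
\[
\omega_{\mathbf{P}}(\epsilon) \cap \mathfrak{P}^{<S} = \omega_{\mathbf{P}^{<S}}(\epsilon) \cap \mathfrak{P}^{<S}.
\]
Marginalizing both pushforwards onto $\mathfrak{P}^{<S}$ then gives equal distributions.

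\emph{Step 2: $\mathfrak{P}^{<S}$ is closed under predecessors.} If $q \in \mathfrak{P}^{<S}$ and $p \to q$ is an edge of $G(\mathbf{P})$, then $p \notin S$. Otherwise $q$ would be a descendant of $p \in S$. Likewise, any descendant of $p$ from $S$ would also be a descendant of $q$, which is impossible. Hence $p \in \mathfrak{P}^{<S}$. So every clause with head in $\mathfrak{P}^{<S}$ has all body atoms in $\mathfrak{P}^{<S}$. This means $\mathbf{P}^{<S}$ is exactly the set of clauses of $\mathbf{P}$ defining atoms of the upward-closed set $\mathfrak{P}^{<S}$, that is, a bottom part of $\mathbf{P}$.

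\emph{Step 3: splitting.} Apply the splitting set theorem of Lifschitz and Turner to $\mathbf{P} \cup \epsilon$ with splitting set $U := \mathfrak{P}^{<S} \cup \mathfrak{A}$. By Step 2 and since abducibles are never heads, no clause with head in $U$ has a body atom outside $U$. The bottom part is $\mathbf{P}^{<S} \cup (\epsilon \cap U)$, where all facts of $\epsilon$ lie in $\mathfrak{A} \subseteq U$. Stable models of $\mathbf{P} \cup \epsilon$ are then exactly the sets $X \cup Y$ with $X$ a stable model of the bottom part and $Y$ a stable model of the top part partially evaluated at $X$. By uniqueness (Theorem~\ref{theorem - unique stable models for stratified programs} for both programs), $\omega_{\mathbf{P}}(\epsilon) \cap U = \omega_{\mathbf{P}^{<S}}(\epsilon) \cap U$. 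The atoms of $\mathfrak{P}\setminus U$ are false in $\omega_{\mathbf{P}^{<S}}(\epsilon)$ but irrelevant, since we only compare on $\mathfrak{P}^{<S}$.

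If one prefers to stay within the paper's framework, Step 3 can instead be argued via causally founded worlds. Take $\omega := \omega_{\mathbf{P}}(\epsilon)$ and define $\omega' := (\omega \cap U) \cup \{\neg p \mid p \notin U\}$. Then $\omega'$ satisfies necessity for $\mathbf{P}^{<S}$, since only clauses with heads in $U$ remain and their bodies are evaluated inside $U$. It also satisfies sufficiency, since any derivation of $\omega \cap \mathfrak{A} \Rrightarrow p$ for $p \in \mathfrak{P}^{<S}$ can be restricted to rules with heads in $U$. So $\omega'$ is the unique model. The main obstacle is this restriction of explanations: showing that a derivation of $p$ via Monotonicity, Cut and Contradiction only needs clauses whose heads are ancestors of $p$. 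The cleanest route is to use the equivalence with stable models from Theorem~4.2 of Rückschloß and Weitkämper and then splitting, which is why Step 3 invokes it directly.
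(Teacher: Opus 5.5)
Your proposal is correct and is essentially the paper's proof: for each choice $\epsilon$, the stratified program $\mathbf{P}\cup\epsilon$ has a unique model whose restriction to $\mathfrak{P}^{<S}$ coincides with that of the unique model of $\mathbf{P}^{<S}\cup\epsilon$, so both P-log distributions are pushforwards of $\pi(\epsilon)$ along maps agreeing on $\mathfrak{P}^{<S}$. You also make explicit what the paper leaves implicit: the closure of $\mathfrak{P}^{<S}$ under predecessors, where your wording should say that an element of $S$ having $p$ as a descendant would also have $q$ as one, and the splitting-set justification, which also covers $S$ containing abducibles, whereas the paper only treats $S\subseteq\mathfrak{P}\setminus\mathfrak{A}$.
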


\begin{proof}
    Let $\epsilon \subseteq \mathfrak{A}$ be a choice and $S \subseteq \mathfrak{P} \setminus \mathfrak{A}$. Since $\textbf{P}$ is stratified, $\textbf{P} \cup \epsilon$ has a unique model $\omega$ and~${\omega \cap \mathfrak{P}^{<S}}$ is the unique stable model of $\textbf{P}^{<S}$. Hence, we find $\pi_{\mathbb{P}}^{\text{P-log}} = \pi (\epsilon) = \pi_{\mathbb{P}^{<S}}^{\text{P-log}}$.
\end{proof}

Note that the analogue of Theorem \ref{theorem - Consistency for Stratified ProbLog Programs} fails for the $\LP^{\MLN}$ semantics~\cite{LPMLN}.

\begin{example}
Consider the $\LP^{\MLN}$ program $\mathbb{P} := \{ (\ln 2, p),\; (\ln 2, q \Leftarrow \neg p) \}$ and set $S := \{ q \}$. We find that $\mathbb{P}^{<S} = \{ (\ln 2, p) \}$. A calculation now shows that
\(
\pi_{\mathbb{P}}^{\LP^{\MLN}}(p) = \frac{4}{7} \neq \frac{2}{3} = \pi_{\mathbb{P}^{<S}}^{\LP^{\MLN}}(p).
\)
\label{example - counterexample LPMLN}
\end{example}

Let $\textbf{CS} := (\mathbb{P}, \mathfrak{A}, \Phi)$ be an arbitrary causal system over an alphabet $\mathfrak{P}$. To obtain a semantics adhering to causal irrelevance (Principle~\ref{principle - causal irrelevance}) in the general case, we follow Williamson~\cite{Williamson2001} and Eelink et al.~\mbox{\cite{EelinkRW25}}. They interpret Principle~\ref{principle - causal irrelevance} as stated in Formalization~\ref{formalization - causal irrelevance - CS} and Principle~\ref{principle - maximum entropy} as requiring entropy to be maximized greedily along the given causal order. This leads to the following semantics for causal systems, which, as Eelink et al.~\mbox{\cite{EelinkRW25}} argue, satisfies Principles~\ref{principle - maximum entropy}--\ref{principle - causal irrelevance} under Parametrization~\ref{parametrization - maximum entropy model}.

\begin{definition}[Causal Semantics]
Recall that two nodes \( p \) and \( q \) of a directed graph \({ G := (V,E) }\) are 
\textbf{strongly connected}, written~\( p \sim q \), if there exist directed paths 
from \( p \) to \( q \) and from~\( q \) to \( p \) in~\( G \). Strong connectedness 
(\(\sim\))/2 is an equivalence relation, and the equivalence classes \( [p] \in V/\sim \) 
are called the \textbf{strongly connected components} of \( G \). Lastly, the resulting 
\textbf{factor graph} \( G/\sim := (V/\sim, E/\sim) \) is a directed acyclic graph, 
where \({ E/\sim := \{ ([p],[q]) \in (V/\sim)^2 \mid (p,q) \in E \} }\).

Let $G$ be the factor graph on the strongly connected components of the dependence graph $G(\textbf{CS})$. To each component~\( {V \in \textbf{V}} \), we associate a random variable ranging over all assignments
${
v : V \to \{\top, \bot\}.
}$

Let \( v \) be a value of a component \( V \in \textbf{V} \) such that~\({ V \cap \mathfrak{A} = \emptyset }\), and~\( \pa(V) \) a value assignment to \( \Pa(V) \). Define
${
    \mathbb{P} \vert V := \{ (w, C) \in \mathbb{P} \mid \head (C) \in V \}
}$ and ${
    \mathfrak{A}^{\Pa(V)} := \{ p \in \mathfrak{P} \mid p \in W,~ W \in \Pa(V) \}
}$.

Consider the system $\mathbf{CS}^{V} := (\mathbb{P} \vert V, \mathfrak{A}^{\Pa(V)}, \emptyset)$. Since all propositions $p \in V$ lie in a causal cycle, Eelink et al.~\cite{EelinkRW25} argue that \emph{causal irrelevance} (Principle~\ref{principle - causal irrelevance}), imposes no constraint. According to Formalizations~\ref{formalization - natural necessity - CS} and~\ref{formalization - causal sufficiency - CS} and Proposition~\ref{proposition - analysis of semantics}, necessity (Principle~\ref{principle - Aquinas}) and sufficiency (Principle~\ref{principle - Leibniz}) imply that there exists a causal explanation of $v$ with premises $\pa(V)$ with probability
\begin{align}
   & \pi_{\mathbf{CS}}^{\emph{causal}}(\mathbf{v} \mid \pa(V)) := \pi_{\mathbb{P} \vert V}(\mathbf{v} \mid \pa(V)) := \label{equation - demonstration} \pi_{\constraint(\mathbb{P} \vert V)}\bigl(\mathbf{v} \mid \pa(V), \sufficient(\LP(\mathbb{P} \vert V)) \bigr).
\end{align}
   
Applying causal irrelevance from Principle~\ref{principle - causal irrelevance} and greedily maximizing entropy from Principle~\ref{principle - maximum entropy} yields the causal semantics for causal systems.


The \textbf{causal structure} $\graph (\textbf{CS})$ of $\textbf{CS}$ results from $G$ by replacing all nodes $V := \{ p \}$ for $p \in \mathfrak{A}$ and all edges $V \rightarrow W$ with one node $S := \mathfrak{A}$ and the edges $S \rightarrow W$. The system $\textbf{CS}$ then assumes the \textbf{inexplainable knowledge} that a choice $\epsilon \subseteq \mathfrak{A}$ occurs with probability: 
    \begin{equation}
     \pi_{\textbf{CS}}^{\text{causal}}(\epsilon) := 
     \pi_{\Phi} (\epsilon).     
    \label{equation - indemonstrable knowledge}
    \end{equation}

Let $\textbf{BN}(\textbf{CS})$ be the Bayesian network that is given by the causal structure $\graph (\textbf{CS})$ and Probabilities (\ref{equation - demonstration}) and~(\ref{equation - indemonstrable knowledge}). The \textbf{causal semantics} of $\textbf{CS}$
is then given by setting
$
\pi_{\textbf{CS}}^{\text{causal}} (\omega) := \pi_{\textbf{BN}(\textbf{CS})}^{BN}(\omega)
$ for each world~$\omega$.
\label{definition - causal semantics}
\end{definition}

\begin{example}
    In Example~\ref{example - counterexample ProbLog}, $\textbf{BN}(\mathbb{P})$ is given by the causal structure 
    $\{ha, hk\} \rightarrow \{a,k\} \rightarrow \{l,m\}$ 
    and probabilities $\pi(ha, hk) = \frac{1}{2}$, \ldots, $\pi(\neg ha, \neg hk) = \frac{1}{12}$, 
    $\pi(a, \neg k \mid ha, hk) = \frac{1}{2}$, \ldots, $\pi(\neg a, \neg k \mid \neg ha, \neg hk) = 1$, 
    and $\pi(l, \neg m \mid k, \neg a) = 0$, \ldots, $\pi(l, \neg m \mid \neg k, a) = \frac{1}{2}$.

    We obtain the probabilities in Example~\ref{example - counterexample ProbLog}, except 
    $\pi_{\mathbb{P}}^{\text{causal}}(\omega_5) = \frac{1}{4}$ and 
    ${\pi_{\mathbb{P}}^{\text{causal}}(\omega_6) = \pi_{\mathbb{P}}^{\text{causal}}(\omega_7) = \frac{1}{8}}$. 
    Hence, Anna answers the door with probability $\frac{5}{12}$, as in Examples~\ref{example - ProbLog program} and~\ref{example - semantics of ProbLog programs}.
    \label{example - causal semantics for ProbLog programs}
\end{example}

\begin{example}
    In Example~\ref{example - counterexample LPMLN}, $\textbf{BN}(\mathbb{P})$ is given by the causal structure $p \rightarrow q$ with probabilities $\pi(p) = \frac{2}{3}$, $\pi(q \mid p) = 0$, and $\pi(q \mid \neg p) = \frac{2}{3}$. Consequently,
    \(
    \pi_{\mathbb{P}}^{\text{causal}}(p) = \frac{2}{3} = \pi_{\mathbb{P}^{<S}}^{\text{causal}}(p),
    \)
    as desired.
    \label{example - causal semantics for LPMLN programs}
\end{example}

The causal semantics consistently generalizes the causal interpretation of stratified ProbLog programs in Theorem \ref{theorem - Consistency for Stratified ProbLog Programs} thereby maintaining causal irrelevance (Principle~\ref{principle - causal irrelevance}) as stated in Formalization \ref{formalization - causal irrelevance - CS}.

\begin{theorem}[Consistency for Stratified ProbLog Programs]
    If $\mathbb{P} := (\textbf{P}, \mathfrak{A}, \Phi)$ is a stratified ProbLog program, one finds that~${\pi_{\mathbb{P}}^{\text{P-log}}  = \pi_{\mathbb{P}}^{\text{causal}} }$.
    \label{theorem - Consistency for Stratified ProbLog Programs - 1}
\end{theorem}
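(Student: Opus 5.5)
The plan is to compare both distributions world by world. Under the P-log semantics of a stratified ProbLog program, Theorem~\ref{theorem - unique stable models for stratified programs} says every choice $\epsilon \subseteq \mathfrak{A}$ determines a unique model $\omega_\epsilon$ of $\textbf{P} \cup \epsilon$. Hence $\pi_{\mathbb{P}}^{\text{P-log}}(\omega) = \pi(\epsilon)$ if $\omega = \omega_\epsilon$, and $0$ otherwise. We then show that the Bayesian network $\textbf{BN}(\mathbb{P})$ of Definition~\ref{definition - causal semantics} induces exactly this distribution.

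\textbf{Root factor.} The root node $S = \mathfrak{A}$ of $\graph(\mathbb{P})$ carries the factor $\pi_\Phi(\epsilon)$. With $\Phi = \{(\ln \pi(u), u), (\ln(1-\pi(u)), \neg u)\}$, every world over $\mathfrak{A}$ gets weight $\prod_{u\in\epsilon}\pi(u)\prod_{u\notin\epsilon}(1-\pi(u))$. These weights already sum to $1$, so $\pi_\Phi(\epsilon) = \pi(\epsilon)$. If some $\pi(u)\in\{0,1\}$, the corresponding weight is read as $0$ or as a hard constraint; I expect only a remark is needed for this edge case.

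\textbf{Non-root factors.} For a component $V$ with $V\cap\mathfrak{A}=\emptyset$ and a parent assignment $\pa(V)$, every clause in $\mathbb{P}\vert V$ has weight $\infty$. Thus $\pi_{\constraint(\mathbb{P}\vert V)}(\cdot \mid \pa(V), \sufficient(\LP(\mathbb{P}\vert V)))$ is the uniform distribution on the causally founded worlds of the abductive program $(\mathbb{P}\vert V, \mathfrak{A}^{\Pa(V)})$ that extend $\pa(V)$.

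The key claim is that stratification forces each component $V$ to contain no negative edge internally. Any negative edge inside a strongly connected component lies on a cycle, which stratification forbids. So $\mathbb{P}\vert V$, viewed with $\mathfrak{A}^{\Pa(V)}$ as abducibles, is itself a stratified abductive program. Theorem~\ref{theorem - unique stable models for stratified programs} then gives exactly one model for each assignment to the abducibles.

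I expect the main obstacle to be matching this up with the conditioning on $\pa(V)$, which assigns values to abducibles negatively as well. The fix is to fix the abducible part $\epsilon' := \pa(V)\cap \mathfrak{P}$, using the equivalence of causally founded worlds with stable models cited from Rückschloß and Weitkämper~\cite{RueckschlossWeitkaemper:2025}. The outcome is that $\pi(\mathbf{v}\mid\pa(V))$ equals $1$ if $\mathbf{v}$ is the $V$-part of the unique stable model of $\mathbb{P}\vert V\cup\epsilon'$, and $0$ otherwise. Here one must check that the sufficiency event is taken relative to $V$ only, so that parent atoms count as explainable premises.

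\textbf{Assembling the product.} By Equation~(\ref{equation - Bayesian network semantics}), $\pi_{\mathbb{P}}^{\text{causal}}(\omega) = \pi(\epsilon)\cdot\prod_V \mathbf{1}[\omega\vert_V \text{ is the local stable model given } \omega\vert_{\Pa(V)}]$. It remains to show that, for fixed $\epsilon$, the only world passing all indicators is $\omega_\epsilon$. We argue by induction along a topological order of the factor graph $G/\!\sim$. This is the standard splitting-set argument (Lifschitz–Turner): the stable model of $\textbf{P}\cup\epsilon$ restricted to the union of an initial segment of components is the stable model of the clauses with heads in that segment. It is the same fact already used in the proof of Theorem~\ref{theorem - Consistency for Stratified ProbLog Programs}.

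Consequently both semantics assign $\pi(\epsilon)$ to $\omega_\epsilon$ and $0$ to every other world, so $\pi_{\mathbb{P}}^{\text{P-log}} = \pi_{\mathbb{P}}^{\text{causal}}$.
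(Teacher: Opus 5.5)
Your proposal is correct and follows the same route as the paper's proof. The paper also fixes a choice $\epsilon$, takes the unique stable model $\omega_\epsilon$, notes that the root factor contributes $\pi(\epsilon)$, and uses that every other conditional of $\textbf{BN}(\mathbb{P})$ is $1$ on $\omega_\epsilon \vert_V$ and $0$ otherwise; you additionally supply two justifications the paper leaves implicit: stratification excludes negative edges inside a strongly connected component, so each local program is stratified and hence deterministic, and the splitting-set argument identifies the local models with the restrictions of $\omega_\epsilon$.
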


\begin{proof}
    Let $\epsilon \subseteq \mathfrak{A}$ be a choice. Since $\mathbb{P}$ is stratified, there is a unique stable model $\omega$ of $\textbf{P} \cup \epsilon$. For every random variable $V$ in the Bayesian network $\textbf{BN}(\mathbb{P})$, we have $\pi(v \mid \pa(V) \cap \omega) = 1$ if $v = \omega \cap V$, and $\pi(v \mid \pa(V) \cap \omega) = 0$ otherwise. Consequently, $\pi_{\mathbb{P}}^{\text{causal}}(\omega) = \pi(\epsilon) = \pi_{\mathbb{P}}^{\text{P-log}}(\omega)$.
\end{proof}

\begin{theorem}[Causal Irrelevance for Causal Systems]
    The causal semantics $\pi_{\cdot}^{\text{causal}}$ satisfies causal irrelevance as stated in Formalization \ref{formalization - causal irrelevance - CS}. 
    \label{theorem - causal irrelevance}
\end{theorem}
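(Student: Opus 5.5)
The plan is to reduce the claim to the corresponding property of Bayesian networks (Formalization~\ref{formalization - causal irrelevance - BN}), which holds for the product semantics of Equation~(\ref{equation - Bayesian network semantics}) by summing out sink nodes. Fix a causal system $\mathbf{CS} := (\mathbb{P}, \mathfrak{A}, \Phi)$ and $S \subseteq \mathfrak{P}$. We will show that $\mathbf{BN}(\mathbf{CS}^{<S})$ and $\mathbf{BN}(\mathbf{CS})$ agree, as conditional probability tables, on every node that lies inside $\mathfrak{P}^{<S}$. We will then show that the nodes of $\mathbf{BN}(\mathbf{CS})$ outside $\mathfrak{P}^{<S}$ are not causes of any node inside it.

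\textbf{Step 1: $\mathfrak{P}^{<S}$ is a union of nodes of $\graph(\mathbf{CS})$ and is ancestrally closed.} First, if $q \in \mathfrak{P}^{<S}$ and $p \sim q$, then any path from some $s \in S$ to $p$ extends to a path from $s$ to $q$. Hence $p$ is also a non-descendant of $S$. Moreover $p \notin S$, since otherwise $q$ would be a descendant of $p \in S$. So $\mathfrak{P}^{<S}$ is a union of strongly connected components. Second, if $p \to q$ with $q \in \mathfrak{P}^{<S}$, then $p$ is again a non-descendant of $S$ and not in $S$. So the set is closed under parents in $G(\mathbf{CS})$, and therefore under parents in the factor graph.

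The abducible node $\mathfrak{A}$ needs separate treatment, since abducibles are sources. If some abducible lies in $S$, then $\mathfrak{P}^{<S}$ may contain other abducibles but not that one. I would handle this by working with the marginal over $\mathfrak{P}^{<S} \cup \mathfrak{A}$ and then summing out the remaining abducibles. Since $\mathfrak{A}$ is the unique root node of $\graph(\mathbf{CS})$, this summation commutes with everything else.

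\textbf{Step 2: the dependence graphs agree below $S$.} By construction, $\mathbb{P}^{<S}$ keeps exactly the clauses with heads in $\mathfrak{P}^{<S}$. Hence $G(\mathbf{CS}^{<S})$ restricted to $\mathfrak{P}^{<S}$ coincides with $G(\mathbf{CS})$ restricted to $\mathfrak{P}^{<S}$: every edge into such a head is kept, and by Step 1 its tail also lies in $\mathfrak{P}^{<S}$. Consequently the strongly connected components inside $\mathfrak{P}^{<S}$ and their parent sets are the same in both systems. Outside $\mathfrak{P}^{<S}$, the system $\mathbf{CS}^{<S}$ has no clauses, so every other atom is an isolated singleton component whose only clauses-free explanation is default negation.

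\textbf{Step 3: the conditional tables agree.} For a component $V \subseteq \mathfrak{P}^{<S}$ we have $\mathbb{P}^{<S}\vert V = \mathbb{P}\vert V$, and the parents coincide. Therefore Equation~(\ref{equation - demonstration}) gives the same conditional $\pi(\mathbf{v} \mid \pa(V))$ in both systems. The table (\ref{equation - indemonstrable knowledge}) for the abducible node is $\pi_\Phi$ in both systems.

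\textbf{Step 4: sum out the remaining nodes.} In both networks, the nodes outside $\mathfrak{P}^{<S}\cup\{\mathfrak{A}\}$ are not ancestors of any node inside it, by Step 1. Marginalizing the product (\ref{equation - Bayesian network semantics}), I sum over these nodes in reverse topological order. Each conditional table sums to one, so both marginals equal the product of the common tables from Step 3. This is exactly Formalization~\ref{formalization - causal irrelevance - BN} applied twice.

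\textbf{Main obstacle.} The main obstacle is confirming that each table (\ref{equation - demonstration}) is a genuine normalized conditional distribution. This requires, for every parent assignment, that the set of sufficient worlds of $\mathbb{P}\vert V$ given $\pa(V)$ has positive weight. I would derive this from the standing consistency assumption in Definition~\ref{definition - causal systems} that every choice yields a stable model, restricted component-wise. If that restriction fails, I would instead state the result for parent assignments of positive probability, which suffices because summing out only involves such assignments.
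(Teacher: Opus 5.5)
Your proposal is correct and is the paper's one-line proof (``by construction from Formalization~\ref{formalization - causal irrelevance - BN}'') written out in detail: $\mathfrak{P}^{<S}$ is an ancestrally closed union of strongly connected components, $\mathbf{BN}(\mathbf{CS})$ and $\mathbf{BN}(\mathbf{CS}^{<S})$ carry identical tables on those components, and summing out the remaining nodes gives equal marginals. Your claim that $\mathfrak{A}$ is the unique root is false (e.g.\ a fact gives a parentless component), but you do not need it, since agreement on $\mathfrak{P}^{<S}\cup\mathfrak{A}$ already implies agreement on $\mathfrak{P}^{<S}$.

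The only real flaw is in your ``main obstacle'' paragraph. Normalization of Table~(\ref{equation - demonstration}) does not follow from the standing consistency assumption, not even at parent values of positive probability. For example, the ProbLog program $\{a \Leftarrow \neg b,\ b \Leftarrow \neg a,\ c \Leftarrow a, \neg c\}$ has the stable model $\{b\}$. Yet given $a$, which has probability $\frac{1}{2}$, the component $\{c\}$ has no sufficient world satisfying the hard constraint. So you should simply presuppose that $\pi^{\text{causal}}_{\mathbf{CS}}$ is defined, as the paper implicitly does by treating a semantics as a partial map. Then note that $\mathbf{CS}^{<S}$ inherits definedness, because each of its tables is either shared with $\mathbf{CS}$ or puts all mass on $\neg p$ for an atom $p \notin \mathfrak{P}^{<S}$.
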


\begin{proof}
    This follows by construction from causal irrelevance in Bayesian networks (Formalization \ref{formalization - causal irrelevance - BN}).
\end{proof}

\section{Implementation}
\label{sec: Implementation}
The causal semantics proposed here is implemented as part of the PLP-BN suite of tools connecting probabilistic logic programs and Bayesian Networks (\url{https://github.com/weitkaemper/plpbn-tools}). 
Written in Logtalk \cite{Logtalk}, this suite of tools represents formalisms like ProbLog programs, $\LP^{\MLN}$ programs and Bayesian networks as object protocols implemented by concrete objects; a parser is provided for reading programs in ProbLog notation. 
It interfaces with a utility library providing Logtalk implementations of factor and variable elimination~\mbox{\cite[Chapters 6 and 7]{Darwiche2009}}, i.e., inference algorithms for discrete Bayesian networks (\url{https://github.com/weitkaemper/categorical-bn-logtalk}).

In this scheme, a Bayesian network on a graph $G$ of discrete random variables exposes a predicate \texttt{cpt/4} taking as input a node, a value for that node and a specification of its parents, and returning a conditional probability between 0 and 1. 

In our implementation of the present semantics, we use the answer set solver Clingo \cite{Clingo} (via a SWI-Prolog interface due to Jan Wielemaker) to compute the answer sets for a single strongly connected component. 
This can be done directly in the case of general cyclic ProbLog programs, while in the case of $\LP^{\MLN}$ programs we use the penalty encoding proven correct by Lee et al.\ \cite{LPMLN2ASP}.  

\section{Conclusion}
Examples~\ref{example - counterexample ProbLog}--\ref{example - counterexample LPMLN} show that the established semantics for $\LP^{\MLN}$~\cite{LPMLN} and ProbLog~\cite{ProbLog} violate causal irrelevance, i.e., beliefs may change when additional unobserved effects are considered. Proposition~\ref{proposition - causal irrelevance and intervention} relates this phenomenon to interventions, further explaining the counterintuitive results in these examples. 
Theorem~\ref{theorem - Consistency for Stratified ProbLog Programs} shows that stratified ProbLog programs satisfy causal irrelevance under P-log semantics~\cite{P-log}. 

We introduce a new formal causal semantics for general $\LP^{\MLN}$ and ProbLog programs together with an implementation that preserves causal irrelevance and yields improved behavior when processing interventions. Theorems~\ref{theorem - Consistency for Stratified ProbLog Programs - 1} and \ref{theorem - causal irrelevance} show that this semantics both extends the causal interpretation of stratified ProbLog programs and satisfies causal irrelevance.

Future work should extend the proposed semantics to non-propositional probabilistic logic programming and study relational representations of interventions therein. Furthermore, integrating the present atemporal notion of causal reasoning with the temporal framework of Vennekens et al.~\cite{cplogic} is a promising direction for future research.

\bibliographystyle{eptcs}
\bibliography{generic}
\end{document}